\newcommand{\IH}{I^{\textrm{ent}}}
\newcommand{\LH}{L^{\textrm{ent}}}
\newcommand{\Imp}{\mathrm{Imp}}
\title{Tracking and Improving Information in the Service of Fairness}
\author{Sumegha Garg\thanks{\texttt{sumeghag@cs.princeton.edu}.  Part of this work completed while visiting Stanford University.}\\Princeton University \and Michael P.~Kim\thanks{\texttt{mpk@cs.stanford.edu}. Part of this work completed while visiting the Weizmann Institute of Science.  Supported, in part, by a Google Faculty Research Award, CISPA Center for Information Security, and the Stanford Data Science Initiative.}\\Stanford University \and Omer Reingold\thanks{\texttt{reingold@stanford.edu}.  Supported by NSF Grant CCF-1763311.}\\Stanford University}
\date{}
\begin{document}
\maketitle

\begin{abstract}
As algorithmic prediction systems have become widespread, fears that these systems may inadvertently discriminate against members of underrepresented populations have grown.
With the goal of understanding fundamental principles that underpin the growing number of approaches to mitigating algorithmic discrimination, we investigate the role of \emph{information} in fair prediction.
A common strategy for decision-making uses a \emph{predictor} to assign individuals a risk score; then, individuals are selected or rejected on the basis of this score.
In this work, we study a formal framework for measuring the \emph{information content} of predictors.
Central to the framework
is the notion of a \emph{refinement}, first studied by \cite{degroot}.
Intuitively, a refinement of a predictor $z$ increases the overall informativeness of the predictions without losing the information already contained in $z$. We show that increasing information content through refinements improves the downstream selection rules across a wide range of fairness measures (e.g.\ true positive rates, false positive rates, selection rates).
In turn, refinements provide a simple but effective tool for reducing disparity in treatment and impact without sacrificing the utility of the predictions.
Our results suggest that in many applications, the perceived ``cost of fairness'' results from an information disparity across populations, and thus, may be avoided with improved information.
\end{abstract}

\section{Introduction}
\label{sec:intro}

As algorithmic predictions
are increasingly employed as parts of systems that
\emph{classify people}, concerns that such classfiers
may be \emph{biased} or \emph{discriminatory}
have increased correspondingly.  These concerns are
far from hypothetical; disparate treatment on the basis
of \emph{sensitive features}, like race and gender,
has been well-documented in diverse algorithmic application
domains \cite{gendershades, wordvecs, korolovaFB}.
As such, researchers across fields
like computer science, machine learning, and economics have
responded with many works aiming to address the serious issues
of fairness and unfairness that arise in automated decision-making
systems.\footnote{Moritz Hardt's
lecture communicates this trend quite succinctly.
\url{https://fairmlclass.github.io/1.html#/4}}

While most researchers studying algorithmic fairness can
agree on the high-level objectives of the field (e.g.,\ \emph{to ensure individuals are not mistreated on the basis
of protected attributes; to promote social well-being and justice
across populations}), there
is much debate about how to translate these normative aspirations
into a concrete, formal \emph{definition} of what it means for
a prediction system to be fair.
Indeed, as this nascent field has progressed, the
efforts to promote ``fair prediction'' have grown increasingly
divided, rather than coordinated.  Exacerbating the problem,
\cite{mitchell} identifies that each new approach to fairness
makes its own set of assumptions, \emph{often implicitly},
leading to contradictory notions about the right way to
approach fairness \cite{chouldechova,kmr}; 
these inconsistencies add to the ``serious challenge for
cataloguing and comparing defintions'' \cite{mitchell}.
Complicating matters further, recent works \cite{delayed,goel}
have identified shortcomings of many well-established notions of
fairness.  At the extreme, these works argue that
blindly requiring certain statistical fairness conditions
may in fact \emph{harm} the communities they
are meant to protect.

The state of the literature makes clear that choosing
an appropriate notion of fairness for prediction tasks
is a challenging affair.
Increasingly, fairness is viewed as a context-dependent
notion \cite{selbst,fifty}, where the ``right'' notion for a given task
should be informed by conversations between
computational and social scientists.
In the hopes of unifying some of
the many directions of research in the area,
we take a step back and ask whether
there are guiding princples
that broadly serve the high-level goals of ``fair'' prediction,
without relying too strongly on any specific notion of fairness.
The present work argues that understanding the
``informativeness'' of predictions
needs to be part of any sociotechnical conversation about the
fairness of a prediction system.
Our main contribution is to provide a technical
language with strong theoretical backing to
discuss informational issues in the context of fair prediction.

\vspace{-11pt}
\paragraph{Our contributions.}
Towards the goal of understanding common themes
across algorithmic fairness,
we investigate the role of \emph{information} in fair prediction.
We study a formal notion of informativeness in predictions
and demonstrate that it serves as an effective tool
for understanding and improving
the utility, fairness, and impact of downstream decisions.
In short, we identify that many ``failures'' of
requiring fairness in prediction systems can be
explained by an \emph{information disparity}
across subpopulations.
Further, we provide algorithmic tools that aim to
counteract these failures by improving informativeness.
Importantly, the framework is not wedded to any specific fairness
desideratum and can be applied broadly to prediction
settings where discrimination may be a concern.
Our main contributions can be summarized as follows:
\vspace{-7pt}
\begin{itemize}
\item First and foremost, we identify informativeness
as a key fairness desideratum.
We provide information-theoretic and algorithmic tools for reasoning
about how much an individual's prediction
reveals about their eventual outcome.
Our formulation clarifies the intuition that more
informative predictions should enable fairer
outcomes, across a wide array of interpretations
of what it means to be ``fair.''
Notably,
\emph{calibration} plays a key technical role in this
reasoning; the information-theoretic framework we present
relies intimately on the assumption that the underlying
predictors are calibrated.
Indeed, our results
demonstrate a surprising application of these calibration-based methods
towards improving parity-based fairness criteria,
running counter to the conventional wisdom that
calibration and parity are completely at odds with one another.
\item In Section~\ref{sec:info}, we provide a self-contained exposition of the framework we use to study
the \emph{information content} of a
predictor.  Information content formally quantifies
the uncertainty over individuals' outcomes given their
predictions.
Leveraging properties of calibrated predictors, we show that the information content of a predictor is directly related to the \emph{information loss}
between the \emph{true} risk distribution and the \emph{predicted} risk distribution.
Therefore, in many cases, information content -- a measurable characteristic of the  
predicted risk distribution -- can serve as a proxy for reasoning about the information disparity across groups.
To compare the information content
of multiple predictors, we
need a key concept called a \emph{refinement};
informally, a refinement
of a predictor increases the overall information content,
without losing any of the original information.
Refinements provide the technical tool for reasoning
about how to \emph{improve} prediction quality.
\item In Section~\ref{sec:value}, we revisit the question
of finding an optimal fair selection rule.
For prominent parity-based
fairness desiderata, we show that the optimal selection
rule can be characterized as the solution to a certain
linear program, based on the given predictor.
We prove that improving the
information content of these predictions via refinements results in a Pareto improvement
of the resulting program in terms of
utility, disparity, and long-term impact.
As one concrete example, if we hold the selection rule's
utility constant, then \emph{refining the underlying
predictions causes the disparity between groups to decrease}.
Additionally, we prove that at times, the \emph{cost} associated
with requiring fairness should be blamed on a
\emph{lack of information} about important subpopulations,
not on the fairness desideratum itself.
\item In Section~\ref{sec:blm}, we describe a simple algorithm,
\texttt{merge},
for incorporating disparate sources of information into
a single calibrated predictor.  The
\texttt{merge} operation can be implemented efficiently,
both in terms of time and sample complexity.
Along the way in our analysis, we introduce the concept
of \emph{refinement distance} -- a measure of how much
two predictors' knowledge ``overlaps''
-- that may be of independent interest.
\end{itemize}

Finally,
a high-level contribution of the present work is to
\emph{clarify challenges} 
in achieving fairness in prediction tasks.  Our framework for tracking
and improving information is particularly compelling because
it does not requires significant technical background
in information theory nor algorithms to understand.
We hope the framework will facilitate interactions between
computational and social scientists to further
unify the literature on fair prediction, and ultimately,
effect change in the fairness of real-world prediction systems.

\subsection{Why information?}

We motivate the study of information content in
fair prediction by giving an intuitive overview of
how information disparities can lead to unfair treatment
and impact.
We scaffold our discussion around examples from
two recent works \cite{delayed,goel} that raise concerns
about using broad-strokes statistical tests as the
\emph{definition} of fairness.
This overview will be informal, prioritizing
intuition over technicality; see Section~\ref{sec:prelim}
for the formal preliminaries.

We consider a standard prediction setting
where a decision maker, who we call the \emph{lender},
has access to a \emph{predictor} $z:\X \to [0,1]$;
from the predicted risk score $z(x)$,
the decison maker must choose whether to
accept or reject the individual $x \in \X$,
i.e.\ whether to give $x$ a loan or not.  For each individual
$x \in \X$, we assume there is an associated outcome
$y \in \set{0,1}$ representing if they would default
or repay a given loan ($0$ and $1$, respectively).
Throughout, we will be focused on \emph{calibrated}
predictors.  Intuitively, calibration requires
that a predicted score of $z(x) = p$ corresponds to the same
level of risk, regardless of whether  $x \in A$ or $x \in B$.
More technically, this means that we can think of $z(x) = p$
as a conditional \emph{probability}; that is, amongst the individuals
who receive score $z(x) = p$, a $p$-fraction of them end up
having $y=1$.
For simplicity, we assume there are two disjoint subpopulations
$A,B \subseteq \X$.  The works of \cite{delayed,goel} mainly focus
on settings where there are material differences between
the distribution of $y$ in
the populations $A$ and $B$, arguing that these
differences can lead to undesirable outcomes.
We argue that even if the true risk of individuals
from $A$ and $B$ are identically distributed,
differences in the distribution of \emph{predicted} risk scores
give rise to the same pitfalls.

\vspace{-11pt}
\paragraph{A caution against parity.}
\cite{delayed} focuses on notions of fairness that require
parity between groups.  One notion they
study is \emph{demographic parity}, which requires that
the selection rate between groups $A$ and $B$ be
equal; that is, $\Pr[x\text{ selected} \given x \in A]
= \Pr[x\text{ selected} \given x \in B]$.  Suppose
that the majority of applicants come from group $A$
and that on-average, members of $A$ tend to have higher
predictions according to $z$.  In such a setting, an
unconstrained utility-maximizing lender would give out
loans at a higher rate in $A$ than in $B$.
The argument in \cite{delayed} against requiring
demographic parity goes as follows:
a lender who is constrained to satisfy demographic parity
must either give out fewer loans in $A$ or more in $B$;
because the lender does not want to give up utility from
loaning to $A$, the constrained lender will give
out more loans in $B$. \cite{delayed} argue that
in many reasonable settings, the lender will end up loaning
to \emph{underqualified} individuals in $B$ who are unlikely to repay;
thus, the default rate in $B$ will increase significantly.
In their model, this increased default rate translates into
\emph{negative impact} on the population $B$, whose members
may go into debt and become even less creditworthy.

\vspace{-11pt}
\paragraph{A caution against calibration.}
In general, \cite{goel} advocates for the use of calibrated
score functions paired with threshold selection policies, where
an individual is selected if $z(x) > \tau$ for some fixed,
group-independent threshold $\tau$.  Still, they caution
that threshold policies paired with calibrated predictors
are not sufficient to guarantee equitable treatment.
In particular, suppose that
the lender is willing to accept individuals if they
have at least $0.7$ probability of returning the loan.
But now consider a set of calibrated risk scores where the scores are
much more confident about $A$ than about
$B$; at the extreme, suppose that
for $x \in A$, $z(x) \in \set{0,1}$ (i.e.\ perfect predictions) and for $x \in B$, $z(x)=0.5$ (i.e.\ uniform predictions).
In this case, using a fixed threshold of $\tau = 0.7$
will select every qualified individual in $A$ and none of the
individuals from $B$, even though, by the fact that $z$ is calibrated, half of them were qualified.
Worse yet, even if we try to select more members of $B$,
every member of $B$ has a $0.5$ probability of defaulting.
Indeed, in this example, we cannot distinguish between the
individuals in $B$ because they all receive the same score $z(x)$.
In other words, we have no \emph{information} within
the population $B$ even though the predictor was calibrated.

These examples make clear that when there are actual
differences in the risk score distributions between
populations $A$ and $B$, seemingly-natural approaches
to ensuring fairness -- enforcing parity amongst groups
or setting a group-independent threshold -- may result
in a disservice to the underrepresented population.
These works echo a perspective raised by \cite{fta}
that emphasizes the distinction between requiring broad-strokes demographic parity as a \emph{constraint} versus stating parity as a \emph{desideratum}.
Even if we believe that groups
should ideally be treated similarly, defining fairness as
satisfying a set of hard constraints may have unintended
consequences.

Note that the arguments above relied on
differences in the predicted risk scores $z(x)$ for
$x \in A$ and $x \in B$, but not the true underlying
risk.  This observation has two immediate corollaries.
On the one hand, in both of these vignettes,
if the \emph{predicted} score distributions
are different between population $A$ and $B$, then
such approaches to fairness could still cause harm,
\emph{even if the true score distributions are
identically distributed}.  On the other hand,
just because $A$ and $B$ look different according to
the predicted scores, they may not actually be different.
Intuitively, the difference between the
\emph{true} risk distribution and the \emph{observed}
risk distribution represents a certain ``information loss.''
Optimistically,
if we could somehow improve the informativeness of the predicted scores
to reflect the underlying populations more accurately, then
the resulting selection rule might
exhibit less disparity between $A$ and $B$
in both treatment and impact.

Concretely,
suppose we're given a set of predicted risk scores where
the scores in $A$ tend to be much more extreme (towards
$0$ and $1$) than those of $B$.
Differences in the risk score distributions such as these
can arise for one of two reasons: either individuals from $B$
are \emph{inherently} more stochastic and unpredictable than those
in $A$; or somewhere along the risk estimation pipeline,
more information was lost about $B$ than about $A$.
Understanding which story is true can be
challenging, if not impossible. Still, in cases where we
can reject the hypothesis that certain individuals
are inherently less predictable than others,
the fundamental question to ask is how to
recover the lost information in our predictions.
In this work, we provide tools to answer this question.

\vspace{-11pt}
\paragraph{Refinements.}
Here, we give a technical highlight of the notion of a \emph{refinement} and the role
refinements serve in improving fairness.
In Section~\ref{sec:info}, we introduce the concept of
\emph{information content}, $I(z)$ which gives a global
measure of how informative a calibrated predictor $z$ is over the population of individuals; intuitively, as $I(z)$ increases, the uncertainty in a typical individual's outcome decreases.
The idea that more information in predictions
could lead to better utility or better fairness is not
particularly surprising. Still, this intuition on its own
presents some challenges.  For example, suppose we're concerned
about minimizing the false positive rate (the fraction of the population where $y=0$ that were selected).
Because $I$ is a \emph{global} measure of uncertainty
over all of $\X$, $I(z)$ could be very high due to confidence
about a population $S_0 \subseteq \X$ that is very likely
to have $y=0$ ($z(x)$ close to $0$), even though $z$ gives very little information ($z(x)$ far from $0$ or $1$)
about the rest of $\X$, which consists of a mix of $y=0$ and
$y=1$.  In this case, a predictor $z'$ with less information ($I(z')$), but
better certainty about even a tiny part of the population where $y=1$ ($z(x)$ close to $1$) would
enable lower false positive rates (with nontrivial selection rate).

As such, we need another way to reason about what it means
for one set of predicted risk scores to have ``better information''
than the other.
Refinements provide the key tool for comparing the
information of predictors.
Intuitively, a calibrated predictor $\rho:\X \to [0,1]$
is a refinement of $z$ if $\rho$ hasn't forgotten any
of the information contained by $z$.
Formally, we say that $\rho$ \emph{refines} $z$ if
$\E_{x \sim \X}[\rho(x) \given z(x) = v] = v$;
this definition is closely related to the idea of
calibration in a sense that we make formal in
Proposition~\ref{prop:refine}.

Refinements allow us to 
reason about how information influences
a broad range of quantities of interest in the context of fair prediction.
To give a sense of this,
consider the following lemma, which we use to
prove our main result in Section~\ref{sec:value}, but is also
independently interesting.  The lemma shows that
under any fixed selection rate $\beta = \Pr_{x \sim \X}[f(x) = 1]$,
the true positive rates, false positive rates, and positive
predictive value all improve with a nontrivial refinement.
\begin{lemma*}
If $\rho$ is a refinement of $z$, then for all selection
rates $\beta \in [0,1]$,
\begin{align*}
\TPR^{\rho}(\beta)\ge \TPR^z(\beta),&&
\FPR^{\rho}(\beta)\le \FPR^z(\beta),&&
\PPV^{\rho}(\beta)\ge \PPV^z(\beta).
\end{align*}
\end{lemma*}
Intuitively, the lemma shows that by improving information
through refinements, mutliple key fairness quantities improve simultaneously.
Leveraging this lemma and other properties of refinements
and calibration, we show that for many different ways
a decision-maker might choose their ``optimal'' selection rule,
the ``quality'' of the selection rule improves under refinements.
We highlight this lemma as one example of the broad applicability
of the refinement concept in the context of fair prediction.

\vspace{-11pt}
\paragraph{Perspective.}
Disparities in the information content of risk scores may arise for many reasons.
The present work clarifies how disparities across groups at early stages of the decision-making pipeline may contribute to disparities in the downstream decisions.
In particular, differences in the availability or quality of training data as well as optimization procedures that are tailored for performance in the majority population could contribute to information loss in the minority.
The present work highlights the importance of auditing existing risk score predictors for information content across groups, and demonstrates that obtaining informative calibrated predictions can
improve fair selection rules, even when the fairness desiderata are based on parity.

\vspace{-11pt}
\paragraph{Organization.}
The manuscript is structured as follows:
Section~\ref{sec:prelim} establishes notation and
covers the necessary preliminaries; Section~\ref{sec:info}
provides the technical framework for measuring information
in predictors; Section~\ref{sec:value} demonstrates
how improving information content improves the resulting
fair selection rules; and Section~\ref{sec:blm} describes
the \texttt{merge} algorithm for combining and refining
multiple predictors.  We conclude with a brief discussion
of the context of this work and some directions for future
investigation.

\subsection{Related Works}

The influential work of
\cite{fta} provided two observations that are of
particular relevance to the present work.
First, \cite{fta} emphasized the pitfalls of
hoping to achieve ``fairness through blindness''
by censoring sensitive information during prediction.
Second, the work highlighted how enforcing
broad-strokes demographic parity conditions
-- even if desired or expected in fair outcomes --
is insufficient to imply fairness.
Our results can be viewed as providing further
evidence for these perspectives.
As discussed earlier, understanding the ways in which
fair \emph{treatment} can fail to provide fair \emph{outcomes}
\cite{goel,delayed}
provided much of the motivation for this work.
For a comprehensive overview of the literature
on the growing list of approaches to fairness
in prediction systems, we recommend the
recent encyclopedic survey of \cite{mitchell}.

A few recent works have (implicitly or explicitly)
touched on the relationship between information and fairness.
\cite{chen} 
argues that discrimination may arise in prediction
systems due to disparity in predictive power;
they advocate for addressing discrimination
through data collection.
Arguably, much of the work on fairness in online prediction
\cite{joseph2016fairness} can be seen as
a way to gather information while maintaining fairness.
Recently, issues of information and fairness were also
studied in unsupervised learning tasks \cite{fairpca}.
From the computational economics literature,
\cite{kleinberg2018algorithmic} presents a simple
planning model that draws similar qualitative
conclusions to this work, demonstrating the
significance of trustworthy information as a key factor in
algorithmic fairness.

The idea that better information improves the lender's ability to make useful and fair predictions may seem intuitive under our framing.
Interestingly, different framings of prediction and informativeness can lead to qualitatively different conclusions.
Specifically, the original work on delayed impact \cite{delayed} suggests that some forms of misestimation (i.e.\ loss of information) may reduce the potential for harm from applying parity-based fairness notions.
In particular, if the lender's predictor $z$ is miscalibrated in a way that underestimates the quality of a group $S$, then increasing the selection rate beyond the global utility-maximizing threshold may be warranted.
In our setting, because we assume that the lender's predictions are calibrated, this type of systematic bias in predictions cannot occur, and more information always improves the resulting selection rule.
This discrepancy further demonstrates the importance of group calibration to our notion of information content.

Other works \cite{juba1,juba2} have investigated the role of
\emph{hiding} information through strategic
signaling. 
In such settings, it may be strategic for a group to hide information about individuals in order to increase the overall selection rate for the group.
These distinctions highlight the fact that understanding exactly the role of information in ``fair'' prediction is subtle and also depends on the exact environment of decision-making.
We further discuss how to interpret our theorems as well as the importance of faithfully translating fairness desiderata into mathematical constraints/objectives in Section~\ref{sec:discussion}.

The present work can also be viewed as
further investigating the tradeoffs between
calibration and parity.
Inspired by investigative reporting on the ``biases''
of the COMPAS recidivism prediction system \cite{propublica},
the incompatability of calibration and parity-based
notions of fairness has received lots of attention
in recent years \cite{chouldechova,kmr,pleiss}.
Perhaps counterintuitively,
our work shows how to leverage properties of calibrated predictors to
improve the disparity of the eventual decisions.
At a technical level, our techniques are similar in flavor to
the those of \cite{multi}, which investigates
how to strengthen group-level calibration as a notion
of fairness; we discuss further connections to
\cite{multi} in Section~\ref{sec:discussion}.

Outside the literature on fair prediction, our notions of information content and refinements are related to other notions from the fields of online forecasting and information theory.
In particular, the idea of refinements was first introduced in \cite{degroot}.
The concept of information content of calibrated predictions is related to ideas
from the forecasting literature \cite{gneiting2007probabilistic,gneiting2007strictly}, including \emph{sharpness} and \emph{proper scoring rules} \cite{brier}.
The concept of a refinement of a calibrated predictor can be seen as a special case of Blackwell's informativeness criterion \cite{blackwell,cremer,degroot}.

\section{Preliminaries}
\label{sec:prelim}

\paragraph{Basic notation.}
Let $\X$ denote the domain of individuals and
$\Y = \set{0,1}$ denote the binary outcome space.
We assume that individuals and their outcomes
are jointly distributed according to
$\D$ supported on $\X \times \Y$.
Let $x,y \sim \D$ denote an independent
random draw from $\D$. For a subpopulation $S \subseteq \X$,
we use the shorthand $x,y \sim \D_S$ to be the data
distribution conditioned on $x \in S$, and $x \sim S$ to denote a
random sample from the marginal
distribution over $\X$
conditioned on membership in $S$.

\vspace{-11pt}
\paragraph{Predictors.}
A basic goal in learning
is to find a \emph{classifier}
$f:\X \to \set{0,1}$ that given $x \sim \X$ drawn
from the marginal distribution over individuals, accurately
predicts their outcome $y$.
One common strategy for binary classification first
maps individuals to a real-valued \emph{score} using a
\emph{predictor} $z:\X \to [0,1]$ and then selects
individuals on the basis of this score.  We denote
by $\supp(z)$ the support of $z$.
We denote by $p^*:\X \to [0,1]$ the
\emph{Bayes optimal predictor}, where
$p^*(x) = \Pr_\D\left[y = 1 \given x\right]$ represents the
inherent uncertainty in the outcome given the individual.
 Equivalently, for each individual $x \in \X$,
their outcome $y$ is drawn independently from $\Ber(p^*(x))$,
the Bernoulli distribution with expectation $p^*(x)$.
While we use $[0,1]$ to denote the codomain of
predictors, throughout this work, we assume that
the set of individuals is finite and hence,
the support of any predictor is 
a discrete, finite subset of the interval.

\vspace{-11pt}
\paragraph{Risk score distributions.}
Note that there is a natural bijection between predictors
and \emph{score distributions}.
A predictor $z$, paired with the marginal distribution over
$\X$, induces a score distribution, which we denote $\S^z$,
supported on $[0,1]$,
where the probability density function is given as
$\S^z(v) = \Pr_{x \sim \X}\left[z(x) = v\right]$.
For a subpopulation $S \subseteq \X$, we denote by
$\S^z_S$ the score distribution conditioned on
$x \in S$.

\vspace{-11pt}
\paragraph{Calibration.}
A useful property of predictors is called
\emph{calibration}, which implies that the scores
can be interpreted meaningfully as the probability that
an individual will result in a positive outcome.
Calibration has been studied extensively in varied
contexts, notably in forecasting and online prediction
(e.g.\ \cite{fv}), and recently as a fairness desideratum
\cite{kmr,pleiss,multi,goel};
the definition we use is adapted from the fairness literature.
\begin{definition}[Calibration]
A predictor $z:\X \to [0,1]$ is \emph{calibrated} on a
subpopulation $S \subseteq \X$ if for all $v \in \supp(z)$,
\begin{equation*}
\Pr_{x,y \sim \D_S}\left[y = 1 \given z(x) = v\right] = v.\label{def:calibration}
\end{equation*}
\end{definition}
For convenience when discussing calibration, we use the notation
$S_{z(x)=v} = \set{x \in S : z(x) =v }$.
Note that we can equivalently
define \emph{calibration with respect to the Bayes optimal predictor},
where $z$ is calibrated on $S$ if for all $v \in \supp(z)$,
$\E_{x \sim S_{z(x)=v}}\left[p^*(x)\right] = v$.
Operationally in proofs, we end up using
this definition of calibration.
This formulation also makes clear that
$p^*$ is calibrated on every subpopulation.

\vspace{-11pt}
\paragraph{Parity-based fairness.}
As a notion of fairness, calibration aims to ensure
similarity between predictions and the true outcome
distribution.  Other fairness desiderata concern
disparity in prediction between subpopulations on the
basis of a sensitive attribute.  For simplicity,
we will imagine individuals are partitioned into two
subpopulations $A,B \subseteq \X$; we will overload
notation and use $\set{A,B}$ to denote
the names of the attributes as well.
We let $\A:\X \to \set{A,B}$ map individuals to their
associated attribute.
The most basic notion of parity is
\emph{demographic parity} (also sometimes called \emph{statistical parity} in the literature),
which states that the selection rate of individuals
should be independent of the sensitive attribute.
\begin{definition}[Demographic parity \cite{fta}]
A selection rule $f:\X \to \set{0,1}$ satisfies
\emph{demographic parity} if
\begin{equation*}
\Pr_{x \sim \X}\left[f(x) = 1 \given \A(x) = A\right]
= \Pr_{x \sim \X}\left[f(x) = 1 \given \A(x) = B\right].
\end{equation*}
\end{definition}
One critique of demographic parity is that the notion
does not take into account the actual qualifications of groups (i.e.\ no dependence on $y$).
Another popular parity-based notion,
called \emph{equalized opportunity},
addresses this criticism by enforcing parity of
\emph{false negative rates} across groups.
\begin{definition}[Equalized opportunity \cite{hps}]
A selection rule $f:\X \to \set{0,1}$ satisfies
\emph{equalized opportunity} if
\begin{equation*}
\Pr_{x,y \sim \D}\left[f(x) = 0 \given y=1,\ \A(x) = A\right]
= \Pr_{x,y \sim \D}\left[f(x) = 0 \given y=1,\ \A(x) = B\right]
\end{equation*}
\end{definition}

In addition to these fairness concepts, the following properties of a selection rule will be useful to track.
Specifically, we define the true positive rate ($\TPR$), false positive rate ($\FPR$),
and positive predictive value ($\PPV$).
\begin{align*}
\TPR(f) &= \Pr_{x,y \sim \D}\left[f(x) = 1 \given y=1\right]\\
\FPR(f) &= \Pr_{x,y \sim \D}\left[f(x) = 1 \given y=0\right]\\
\PPV(f) &= \Pr_{x,y \sim \D}\left[y=1 \given f(x) = 1\right]
\end{align*}

\section{Measuring information in binary prediction}
\label{sec:info}

In this section, we give a self-contained exposition of a formal notion of information content in calibrated predictors.
These notions have been studied extensively in the forecasting literature (see \cite{gneiting2007probabilistic,gneiting2007strictly} and references therein), but are less common in the literature on computational and statistical learning theory.
Our notion of information content can be derived from the Brier scoring rule \cite{brier}.

\label{sec:info:content}
In the context of binary prediction,
a natural way to measure the ``informativeness'' of a
predictor is by the uncertainty in an individual's
outcome given their score.  We quantify this uncertainty
using \emph{variance}.\footnote{
Alternatively, we could
measure uncertainty through Shannon entropy (in fact, any function that admits a Bregman divergence).  The generality of the approach is made clear in \cite{gneiting2007strictly}. In Appendix~\ref{app:entropy},
we show that notions of information that arise from Shannon entropy are effectively interchangeable with those that arise from variance.
We elect to work with variance in the main body primarily because it simplifies the analysis in Section~\ref{sec:blm}.}
For $p \in [0,1]$,
the variance of a Bernoulli random variable
with expected value $p$ is given as
$\Var(\Ber(p)) = p \cdot (1-p)$.
Note that variance is a strictly concave function in $p$ and
is maximized at $p=1/2$ and minimized $p\in \set{0,1}$;
that is, a Bernoulli trial with $p=1/2$ is maximally
uncertain whereas a trial with $p=0$ or $p=1$ is
perfectly certain.
Consider a random draw $x,y \sim \D$.
If $z$ is a calibrated predictor, then given $x$ and $z(x)$,
the conditional distribution over $y$ follows a Bernoulli
distribution with expectation $z(x)$.
This observation suggests the following defintion.
\begin{definition}[Information content]
Suppose for $S \subseteq \X$, $z:\X\to[0,1]$ is calibrated on $S$.
The \emph{information content} of $z$ on $S$ is given as
\begin{equation*}
I_S(z) = 1-4\cdot\E_{x \sim S}\left[z(x)(1-z(x))\right].
\end{equation*}
\end{definition}
For a calibrated $z$, we use $I(z) = I_\X(z)$ to
denote the ``information content of $z$''.
The factor $4$ in the definition of information content
acts as a normalization factor such that $I(z) \in [0,1]$.
At the extremes, a perfectly informative predictor has information
content $1$, whereas a calibrated predictor that always
outputs $1/2$ has $0$ information.

This formulation of information content as uncertainty
in a binary outcome
is intuitive in the context of binary classification.
In some settings, however, it may be more
instructive to reason about risk score distributions
directly.  A conceptually different approach
to measuring informativeness of a risk score
distribution might track the uncertainty in the true
(Bayes optimal) risk, given the predicted risk score.

Consider a random variable $P^*_{z(x)=v}$ that takes
value $p^*(x)$ for $x$ sampled from the individuals
with score $z(x) = v$; that is, $P^*_{z(x)=v}$ equals the true risk for an individual sampled amongst those receiving predicted risk score $v$.
Again, we could measure the uncertainty
in this random variable by tracking its variance
given $z(x) = v$;
the higher the variance, the less information the risk score
distribution $\S_z$ provides about the true risk score
distribution $\S_{p^*}$.
Recall, for a predictor $z$ that is calibrated on $S \subseteq \X$
and score $v \in [0,1]$, we let $S_{z(x)= v}= \set{x \in S : z(x)=v}$.
Consider the variance in $P^*_v$ given as
\begin{align*}
\Var\left[P^*_{z(x)=v}\right] &=
\Var_{x \sim S_{z(x) = v}}\left[p^*(x)\right]\\
&= \E_{x \sim S_{z(x)=v}}\left[\left(p^*(x) - v\right)^2\right].
\end{align*}
We define the \emph{information loss} by taking an
expectation of this conditional variance
over the score distribution induced by $z$.
\begin{definition}[Information loss]
For $S \subseteq \X$, suppose a predictor $z:\X \to [0,1]$ is calibrated on $S$.
The \emph{information loss} of $z$ on $S$ is given as
\begin{equation*}
L_S(p^*;z) = 4 \cdot \E_{\substack{v \sim \S^z_S\\x \sim S_{z(x)=v}}}\left[\left(p^*(x) - v\right)^2\right]
\end{equation*}
\end{definition}
Again, the factor $4$ is simply to normalize the
information loss into the range $L(p^*;z) \in [0,1]$.
This loss is maximized when $p^*$ is a $50$:$50$ mix
of $\set{0,1}$ but $z$ always predicts $1/2$;
the information loss is minimized for $z = p^*$.
We observe that this notion of information loss
is actually proportional to the expected squared error
of $z$ with respect to $p^*$; that is,
\begin{align*}
\E_{\substack{v \sim \S_S^z\\x \sim S_{z(x)=v}}}\left[\left(p^*(x) - v\right)^2\right]
&= \sum_{v \in \supp(z)}\S_S^z(v)
\cdot \E_{x \sim S_{z(x)=v}}\left[\left(p^*(x) - v\right)^2\right]\\
&= \E_{x \sim S}\left[\left(p^*(x) - z(x)\right)^2\right]
\end{align*}
Thus, for calibrated predictors,
we can interpret the familiar squared loss between a
predictor and the Bayes optimal predictor
as a notion of information loss.

\vspace{-11pt}
\paragraph{Connecting information content and information loss.}

As we defined them, information content and information
loss seem like conceptually different ways to measure
uncertainty in a predictor.  Here, we show that they
actually capture the same notion.  In particular, we can
express information loss of $z$ as the difference in
information content of $p^*$ and that of $z$. 
\begin{proposition}
\label{prop:infoloss}
Let $p^*:\X \to [0,1]$ denote the Bayes optimal predictor.
Suppose for $S \subseteq \X$, $z:\X \to [0,1]$ is calibrated on $S$.
Then
\begin{equation*}
\label{eqn:infoloss}
L_S(p^*;z) = I_S(p^*) - I_S(z).
\end{equation*}
\end{proposition}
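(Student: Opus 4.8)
The plan is to expand everything into first and second moments of $z$ and $p^*$ on $S$, and then use calibration to identify a cross term. As a starting point, I would use the reformulation already established in the excerpt, namely that
\begin{equation*}
L_S(p^*;z) = 4 \cdot \E_{x \sim S}\left[\left(p^*(x) - z(x)\right)^2\right]
= 4\left(\E_{x\sim S}\!\left[p^*(x)^2\right] - 2\,\E_{x\sim S}\!\left[p^*(x)z(x)\right] + \E_{x\sim S}\!\left[z(x)^2\right]\right).
\end{equation*}
On the other side, unwinding the definitions and using $q(1-q) = q - q^2$,
\begin{equation*}
I_S(p^*) - I_S(z) = 4\,\E_{x\sim S}\!\left[z(x)(1-z(x))\right] - 4\,\E_{x\sim S}\!\left[p^*(x)(1-p^*(x))\right]
= 4\left(\E_{x\sim S}[z(x)] - \E_{x\sim S}[z(x)^2] - \E_{x\sim S}[p^*(x)] + \E_{x\sim S}[p^*(x)^2]\right).
\end{equation*}
So the claim reduces to the two identities $\E_{x\sim S}[z(x)] = \E_{x\sim S}[p^*(x)]$ and $\E_{x\sim S}[p^*(x)z(x)] = \E_{x\sim S}[z(x)^2]$, since substituting these into the second display turns it into the first.

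Both identities follow by conditioning on the level sets $S_{z(x)=v}$ for $v \in \supp(z)$ and applying calibration in its Bayes-optimal form, $\E_{x \sim S_{z(x)=v}}[p^*(x)] = v$. For the first moments, averaging this over $v \sim \S^z_S$ immediately gives $\E_{x\sim S}[p^*(x)] = \E_{v\sim\S^z_S}[v] = \E_{x\sim S}[z(x)]$. For the cross term, on each level set $z(x) \equiv v$ is constant, so $\E_{x \sim S_{z(x)=v}}[p^*(x)z(x)] = v\cdot\E_{x\sim S_{z(x)=v}}[p^*(x)] = v^2 = \E_{x\sim S_{z(x)=v}}[z(x)^2]$; averaging over $v \sim \S^z_S$ gives $\E_{x\sim S}[p^*(x)z(x)] = \E_{x\sim S}[z(x)^2]$. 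Chaining the two displays then yields $L_S(p^*;z) = I_S(p^*) - I_S(z)$; in fact both sides equal $4\left(\E_{x\sim S}[p^*(x)^2] - \E_{x\sim S}[z(x)^2]\right)$.

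There is no serious obstacle here — the statement is essentially a bookkeeping identity — but the one place where care is needed, and the only place the hypothesis is used, is the cross-term step: the claim is false without calibration, and the crucial point is that calibration forces the correlation $\E[p^* z]$ on each level set to collapse to the second moment of $z$ there. An alternative, perhaps slightly slicker, route that avoids separately handling the first moments is to compute $I_S(p^*) - I_S(z) - L_S(p^*;z) = 4\,\E_{x\sim S}\big[(z(x)-p^*(x))(1-2z(x))\big]$ directly and observe that the conditional expectation over $S_{z(x)=v}$ is $(v - v)(1-2v) = 0$ for every $v$; I would mention this as a one-line variant but carry out the moment-matching version above as the main argument.
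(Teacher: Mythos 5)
Your proof is correct and follows essentially the same route as the paper: both arguments hinge on conditioning on the level sets $S_{z(x)=v}$, using calibration in the form $\E_{x\sim S_{z(x)=v}}[p^*(x)] = v$ to collapse the cross term, and reducing both sides to the common quantity $4\bigl(\E_{x\sim S}[p^*(x)^2] - \E_{x\sim S}[z(x)^2]\bigr)$. The only cosmetic difference is organizational: the paper transforms $L_S(p^*;z)$ step by step into $I_S(p^*) - I_S(z)$, whereas you expand both sides and match moments; your parenthetical one-line variant via $\E\bigl[(z-p^*)(1-2z)\bigr]=0$ is a nice streamlining but not a conceptually distinct argument.
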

\begin{proof}
The proof follows by expanding the information loss and
rearranging so that, assuming $z$ is calibrated, terms
cancel.
As notational shorthand, let $S_v = S_{z(x)=v}$
and let $\bar{p} = \E_{x \sim S}[p^*(x)] = \E_{x \sim S}[z(x)]$.
Thus, we can rewrite the information loss $L_S(p^*;z)$
as follows.
\begin{align*}
4 \cdot \E_{x \sim S}\left[\left(p^*(x) - z(x)\right)^2\right]
&= 4\cdot \E_{x \sim S}\left[p^*(x)^2 + z(x)^2 - 2p^*(x)\cdot z(x)\right]\\
&=4 \cdot \sum_{v \in \supp(z)}\S_S^z(v)\cdot
\left(\E_{x \sim S_v}\left[p^*(x)^2\right] + v^2 - 2\E_{x \sim S_v}\left[p^*(x)\right]\cdot v\right)\\
&=4 \cdot \sum_{v \in \supp(z)}\S_S^z(v)\cdot
\E_{x \sim S_v}\left[p^*(x)^2 - v^2\right]\label{eqn:infos:cal}\addtag\\
&=4 \cdot \E_{x \sim S}\left[p^*(x)^2 - z(x)^2\right]\\
&= \left(1 - 4 \cdot \E_{x \sim S}\left[\bar{p} - p^*(x)^2\right]\right)
- \left(1 - 4 \cdot \E_{x \sim S}\left[\bar{p}-z(x)^2\right]\right)\\
&= I_S(p^*) - I_S(z)
\end{align*}
where (\ref{eqn:infos:cal}) follows because
$\E_{x \sim S_v}\left[p^*(x)\right] = v$
under calibration.
\end{proof}

Because the information loss is a nonnegative quantity,
Proposition~\ref{prop:infoloss} also formalizes the
inutition that the Bayes optimal predictor is the most
informative predictor; for $z \neq p^*$, $I(z) < I(p^*) \le 1$.
Ideally, in order to evaluate the information disparity
across groups, we would compare
the information lost from $p^*$ to $z$ across $A$ and $B$.
But because the definition of information
loss depends on the true score distribution $p^*$,
in general, it's impossible to directly compare the loss. 
Still, if we believe that $p^*(x)$
is similarly distributed across $x \sim A$ and $x \sim B$,
then measuring the information contents $I_A(z)$ and $I_B(z)$
-- properties of the \emph{observed} risk scores --
allows us to directly compare the loss.

\subsection{Incorporating information via refinements}
\label{sec:info:refine}
We have motivated the study of informativeness in prediction
with the intuition that as information content improves,
so too will the resulting fairness and utility of the
decisions derived from the predictor.  Without further
assumptions, however, this line of reasoning
turns out to be overly-optimistic.
For instance, consider a setting where 
the expected utility of lending to individuals
is positive if $z(x) > \tau$ for some fixed threshold $\tau$ (and negative
otherwise).
In this case, information about individuals whose $p^*(x)$
is significantly below $\tau$ is not especially useful.
Figure~\ref{fig:pmfs} gives an example
of two predictors, each calibrated to the same $p^*(x)$,
where $I(z') > I(z)$, but $z$ is preferable.
At a high-level, the example exploits the fact that
information content $I(z)$ is a global property of $z$,
whereas the quantities that affect the utility and fairness directly,
like $\PPV$ or $\TPR$ are \emph{conditional} properties. 
Even if $I(z') > I(z)$, it could be that $z'$ has lost
information about an important subpopulation compared to $z$,
compensating with lots of information about the
unqualified individuals.
\begin{figure}[t!]
    \centering
        \vspace{-22pt}
        \includegraphics[width=0.3\textwidth]{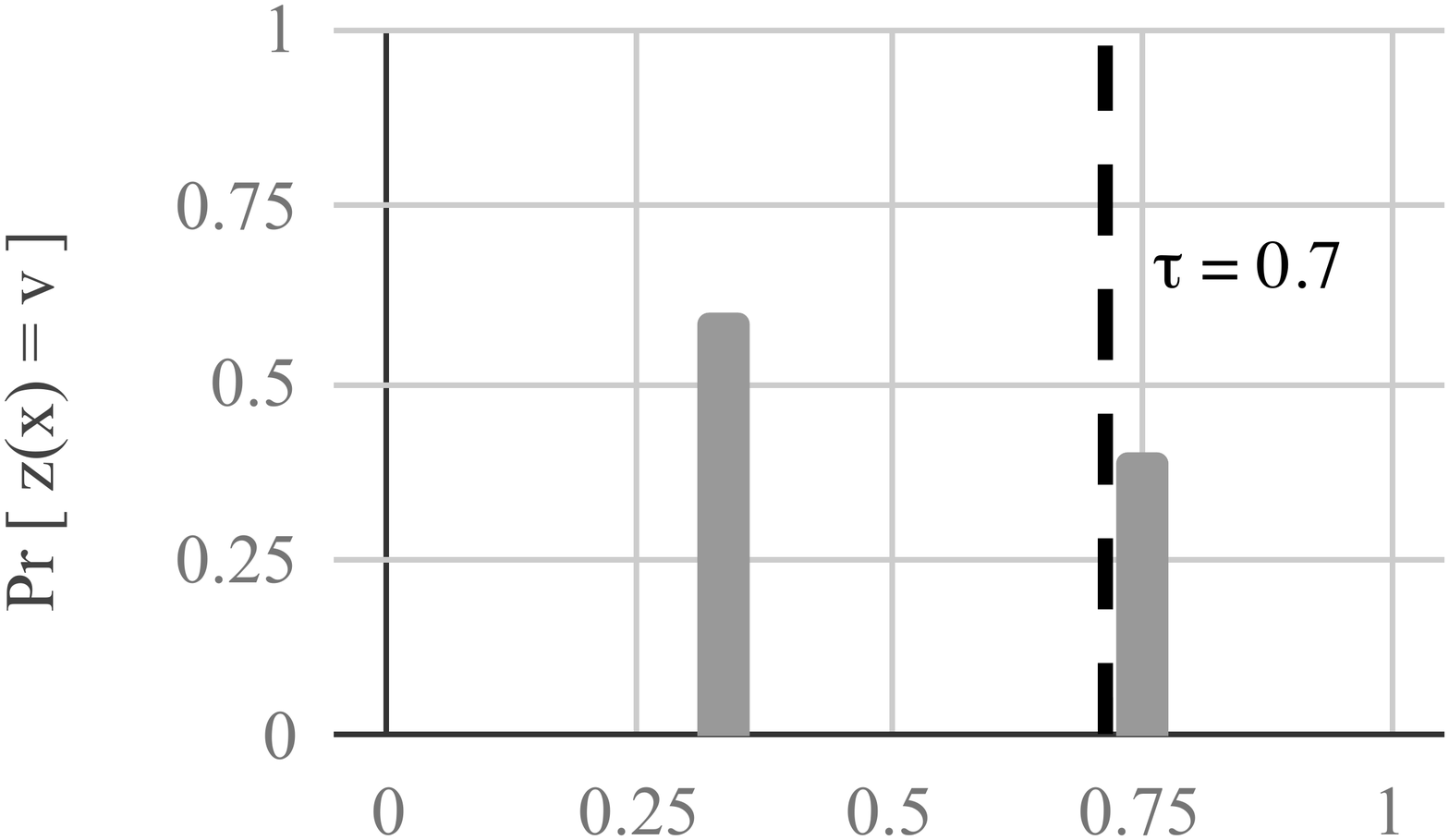}\hspace{8pt}
        \includegraphics[width=0.3\textwidth]{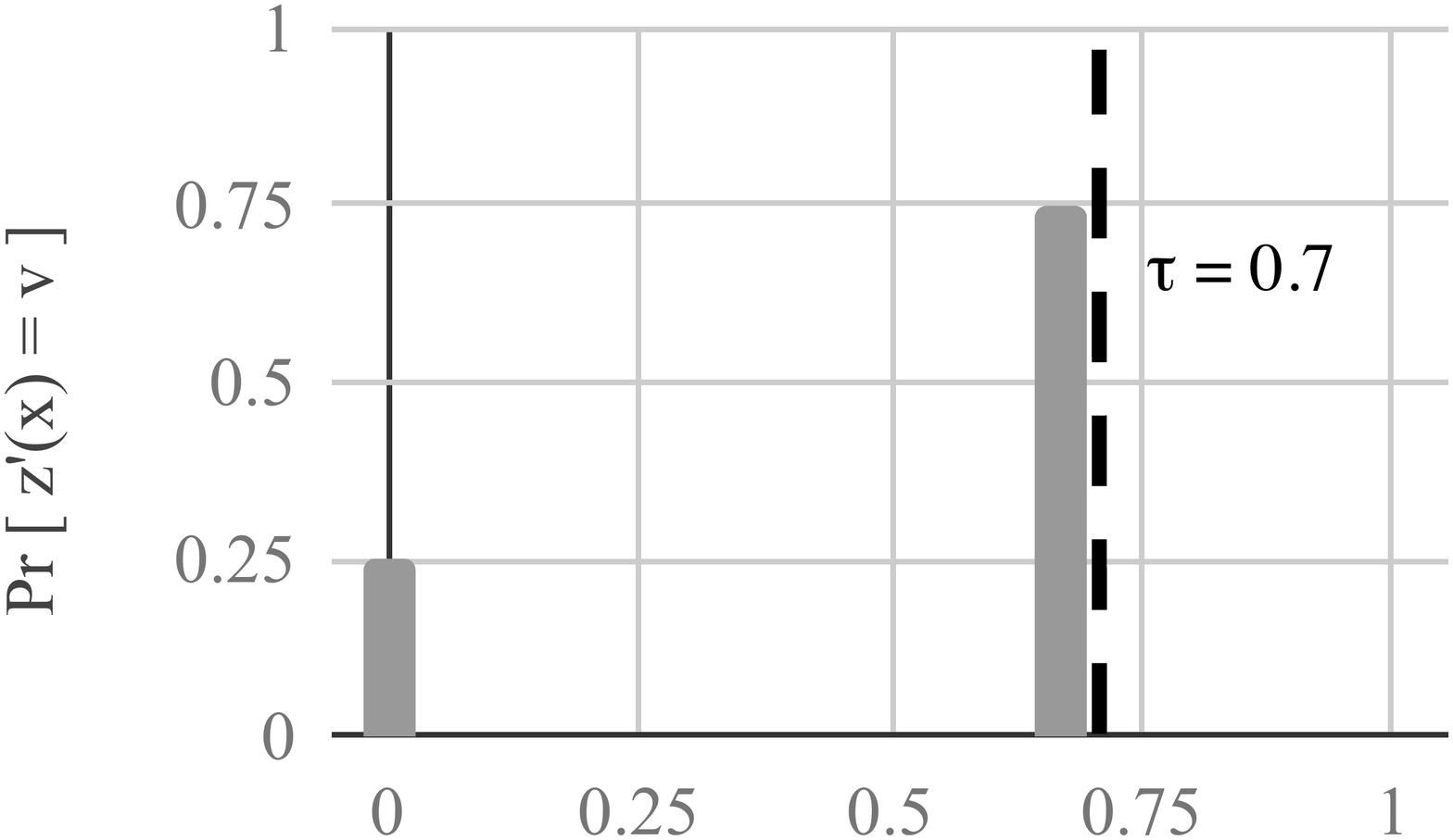}
        \vspace{-11pt}
    \caption{Comparing information content directly is insufficient to compare predictors' utility.
    \emph{Two predictors $z,z':\X \to [0,1]$ are each calibrated
    to $p^*:\X \to \set{0,1}$ with $\E\left[p^*(x)\right] = 1/2$.
    In $z$, $\Pr[z(x) = 1/3] = 3/5$ and $\Pr[z(x) = 3/4] = 2/5$;
    in $z'$, $\Pr[z(x) = 0] = 1/4$ and $\Pr[z(x) = 2/3] = 3/4$.
    $I(z') > I(z)$, but $z$ achieves better utility than $z'$ whenever
    $2/3 < \tau < 3/4$.}}\label{fig:pmfs}
    \vspace{-11pt}
\end{figure}

Still, we would like to characterize ways in which more information
is definitively ``better.''  Intuitively, more information is
better when we don't have to give up on the information
in the current predictor, but rather refine the information
contained in the predictions further.
The following definition, equivalent to a notion proposed in \cite{degroot}, formalizes this idea.
\begin{definition}[Refinement]
For $S \subseteq \X$,
suppose $z,z':\X \to [0,1]$ are calibrated on $S$.
$z'$ is a \emph{refinement} of $z$ on $S$ if for all $v \in \supp(z)$,
\begin{equation*}
\E_{x \sim S_{z(x)=v}}\left[z'(x)\right] = v.
\end{equation*}
\end{definition}
That is, we say that $z'$ \emph{refines} $z$ if $z'$ maintains
the same expectation over the level sets $S_{z(x)=v}$.
To understand why this property makes sense in the context of maintaining
information from $z$ to $z'$, suppose the property was violated:
that is, there is some $v \in \supp(z)$ such that
$\E_{x \sim S_{z(x)=v}}[z'(x)]\neq v$.  This disagreement provides
evidence that $z$ has some consistency with the true risk
that $z'$ is lacking; because $z$ is calibrated,
$\E_{x \sim S_{z(x)=v}}[z(x)] = v = \E_{x \sim S_{z(x)=v}}[p^*(x)]$.
In other words, even if $z'$ has greater information content,
it may not be consistent with the content of $z$.

Another useful perspective on refinements is through measuring
the information on each of the sets $S_{z(x)=v}$.
Restricted to $S_{z(x)=v}$, $z$ has minimal information
content -- its predictions are constant --
whereas $z'$ may vary.
Because $z'$ is calibrated and maintains the expectation over $S_{z(x)=v}$,
we can conclude that $I_{S_{z(x)=v}}(z) \le I_{S_{z(x)=v}}(z')$ for each of
the partitions.

This perspective highlights the importance of requiring \emph{calibration} in the definition of refinements.
Indeed, because a refinement is a calibrated predictor, refinements cannot make arbitrary distinctions in predictions, so any additional distinctions on the level sets of the original predictor must represent true variability in $p^*$.
We draw attention to the similarity between
the definition of a refinement and the definition of calibration.
In particular, if $z'$ is a refinement of $z$, then $z$ is not only
calibrated with respect to $p^*$, but also with respect to $z'$; stated differently, $p^*$ is a refinement of every calibrated predictor.
Indeed, one way to interpret a refinement is as a
``candidate'' Bayes optimal predictor.
Carrying this intuition through, we note that the only property
of $p^*$ we used in the proof of Proposition~\ref{prop:infoloss}
is that it is a refinement of a calibrated $z$.  Thus,
we can immediately restate the proposition in terms of generic
refinements.
\begin{proposition}\label{prop:refine}
Suppose for $S \subseteq \X$, $z,z':\X \to [0,1]$ are calibrated on $S$.
If $z'$ refines $z$ on $S$, then
$L_S(z';z) = I_S(z') - I_S(z)$.
\end{proposition}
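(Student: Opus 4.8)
The plan is to observe that Proposition~\ref{prop:refine} is an immediate generalization of Proposition~\ref{prop:infoloss}: the proof of the latter never used anything special about $p^*$ beyond the single fact that it refines the calibrated predictor $z$ (i.e.\ that $\E_{x \sim S_{z(x)=v}}[p^*(x)] = v$ for every $v \in \supp(z)$). So the approach is to rerun that computation verbatim, with $z'$ playing the role of $p^*$, and check that the one place calibration-of-$p^*$-with-respect-to-$z$ was invoked is exactly the refinement hypothesis on $z'$.

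Concretely, I would first unwind the definition of information loss, writing $L_S(z';z) = 4\cdot\E_{x\sim S}\big[(z'(x)-z(x))^2\big]$, using the same identity established in Section~\ref{sec:info} that the expected conditional variance equals the expected squared error (this identity holds for any $z'$, since it only rearranges the conditioning over level sets of $z$ and does not use calibration of $z'$). Then expand the square as $z'(x)^2 + z(x)^2 - 2 z'(x) z(x)$, group by level sets $S_v = S_{z(x)=v}$, and use the refinement condition $\E_{x\sim S_v}[z'(x)] = v$ together with the calibration condition $\E_{x\sim S_v}[z(x)] = v = z(x)|_{S_v}$ to collapse the cross term and the $z(x)^2$ term, leaving $4\cdot\E_{x\sim S}[z'(x)^2 - z(x)^2]$. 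Finally, introducing $\bar p = \E_{x\sim S}[z'(x)] = \E_{x\sim S}[z(x)]$ (equal because $z'$ refines $z$ and $z$ is calibrated), rewrite $4\E[z'(x)^2 - z(x)^2]$ as $(1 - 4\E[\bar p - z(x)^2]) - (1 - 4\E[\bar p - z'(x)^2]) = I_S(z') - I_S(z)$, recognizing each parenthesized expression as $I_S(z)$ and $I_S(z')$ respectively (after noting $\E[\bar p] = \bar p = \E[z'(x)] = \E[z(x)]$, so adding and subtracting $4\bar p$ is harmless).

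The only genuine content is bookkeeping: making sure that in the chain of equalities, the step that previously read ``follows because $\E_{x\sim S_v}[p^*(x)] = v$ under calibration'' is now justified by ``follows because $\E_{x\sim S_v}[z'(x)] = v$ since $z'$ refines $z$,'' and that the step identifying $I_S(z')$ uses calibration of $z'$ on $S$ (which is part of the hypothesis) in the same way the original used calibration of $p^*$ (which was automatic). There is essentially no obstacle here — the main thing to get right is stating clearly that the proof of Proposition~\ref{prop:infoloss} is \emph{parametric} in the refining predictor, so that no recomputation is strictly necessary; one can simply say ``the proof is identical to that of Proposition~\ref{prop:infoloss}, replacing $p^*$ by $z'$ and invoking the refinement hypothesis in place of the automatic calibration of $p^*$,'' and optionally reproduce the short display for completeness. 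The paper in fact foreshadows exactly this in the sentence preceding the statement (``the only property of $p^*$ we used\ldots is that it is a refinement of a calibrated $z$''), so I would keep the proof to a couple of lines.
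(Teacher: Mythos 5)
Your proposal is correct and matches the paper's own treatment exactly: the paper gives no separate proof for Proposition~\ref{prop:refine}, instead stating right before it that ``the only property of $p^*$ we used in the proof of Proposition~\ref{prop:infoloss} is that it is a refinement of a calibrated $z$,'' and then asserting the generalization is immediate. Your careful bookkeeping (the refinement hypothesis supplies $\E_{x\sim S_v}[z'(x)]=v$ in place of calibration of $p^*$, and calibration of $z'$ on $S$ is what lets you recognize the final expression as $I_S(z')$) is exactly the substitution the paper has in mind.
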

This characterization further illustrates the notion that
a refinement $z'$ could plausibly be the true risk
given the information in the current predictions $z$.
In particular, because $L_S(z';z)> 0$, we get $I_S(z') > I_S(z)$
for any refinement $z'\neq z$.

In the context of fair prediction, we want to
ensure that the information content on specific protected
subpopulations does not decrease. Indeed, in this case,
it may be important to ensure that the predictions are
refined, not just overall, but also on the sensitive
subpopulations.
In Figure~\ref{fig:group:refine},
we illustrate this point by showing two predictors
$z,z':\X\to[0,1]$ that are each calibrated on two
subpopulations $A,B \subseteq \X$;
$z'$ refines $z$ on $\X$ overall,
but $z'$ loses information about the subpopulation $A$.
This negative example highlights the importance of
incorporating all the information available (e.g.\ group
membership), not only at the time of decision-making,
but also along the way when developing predictors;
it serves as yet another rebuke of the approach
of ``fairness through blindness'' \cite{fta}.
\begin{figure}[t!]
    \centering
    \vspace{-22pt}
    \hspace{-22pt}\includegraphics[height=1.3in]{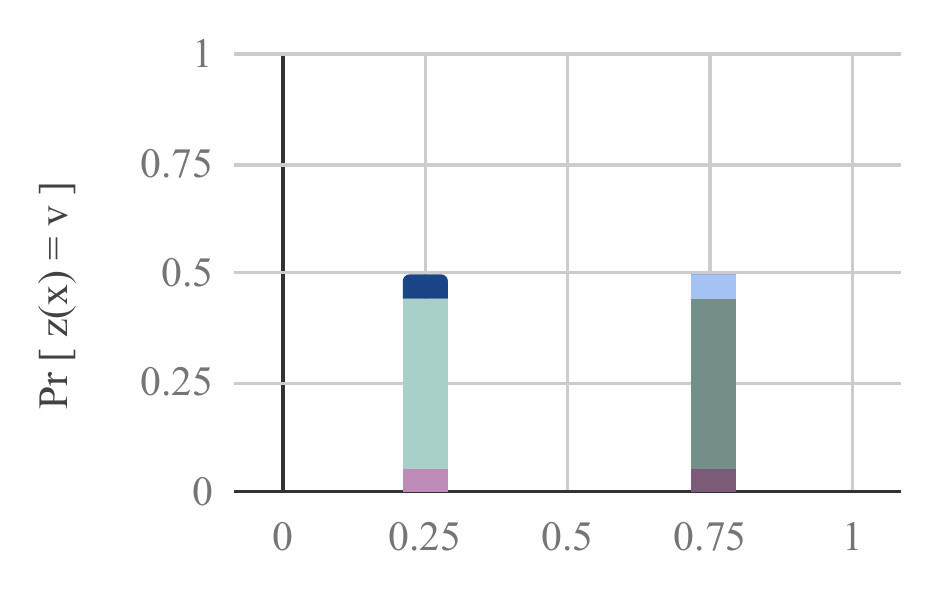}\hspace{-8pt}\includegraphics[height=1.3in]{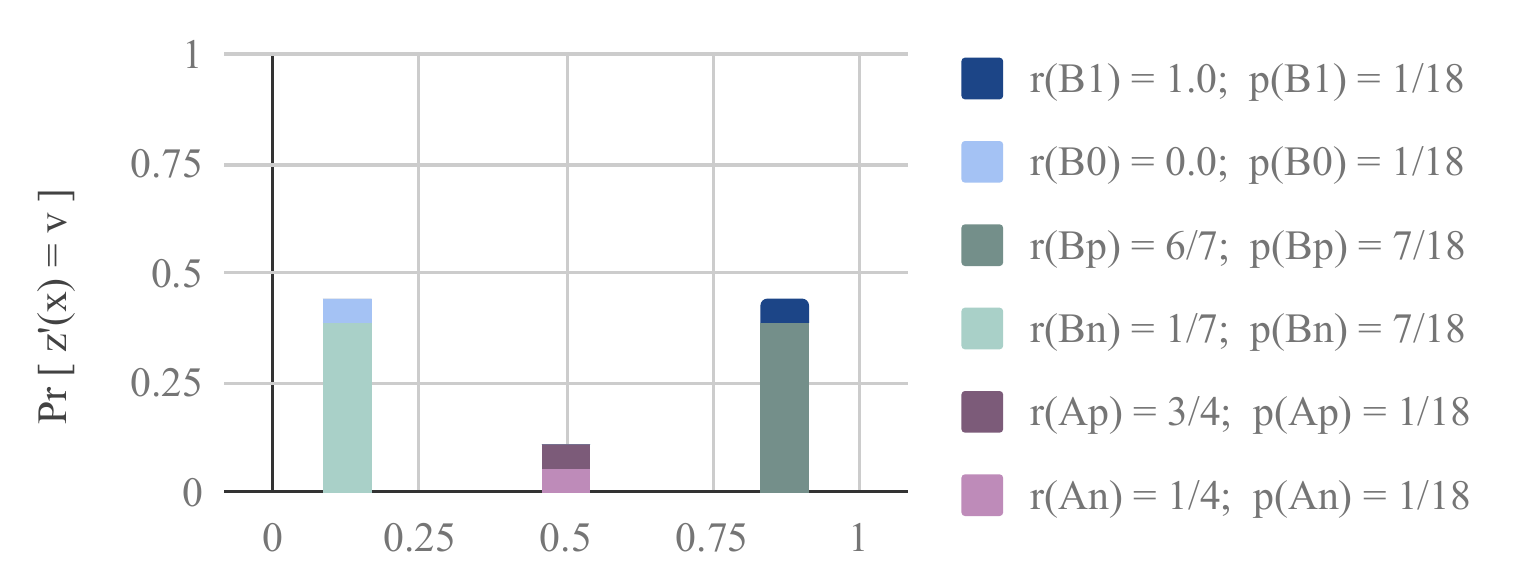}
    \caption{Per-group refinement is necessary to maintain information for each group.\label{fig:group:refine}
    \emph{
    Let $A = A_n \cup A_p$ and $B = B_n \cup B_p \cup B_0 \cup B_1$ where $r(S) = \E_{x \sim S}[p^*(x)]$ and $p(S) = \Pr_{x \sim \X}[x \in S]$.
    The two predictors $z,z':\X \to [0,1]$ are each calibrated
    on $A$ and $B$.  Note that $z'$ refines $z$ overall,
    but has lost all information about $A$.}}\label{fig:cal}
    \vspace{-11pt}
\end{figure}

\section{The value of information in fair prediction}
\label{sec:value}

In this section, we argue that
reasoning about the information content
of calibrated predictors provides a lens
into understanding how to improve the
utility and fairness of predictors,
even when the eventual fairness desideratum
is based on parity.
We discuss a prediction setting
based on that of \cite{delayed} where a \emph{lender}
selects individuals to give loans from a pool of \emph{applicants}.
While we use the language of predicting creditworthiness,
the setup is generic and can be applied to diverse prediction
tasks.
\cite{delayed} introduced a notion of ``delayed
impact'' of selection policies, which models
the potential negative impact on communities
of enforcing parity-based fairness as a constraint.
We revisit the question of delayed impact as part of
a broader investigation of the role of information
in fair prediction.
We begin with an overview of the prediction
setup.  Then, we prove our main result:
refining the underlying predictions used
to choose a selection policy results in an
improvement in utility, parity, or impact
(or a combination of the three).

\subsection{Fair prediction setup}
\label{sec:value:setup}
When deciding how to select qualified individuals,
the lender's goal is to maximize some expected
utility. 
Specifically, the \emph{utility function}
$u:[0,1] \to [-1,1]$ specifies the lender's
expected utility from an individual based on their
score and a fixed threshold\footnote{Assuming such an
affine utility function is equivalent
to assuming that the lender
receives $u_+$ from repayments, $u_-$ from defaults,
and $0$ from individuals that do not receive loans.
In this case, the expected utility
for score $p$ is $pu_+ + (1-p)u_- = c_u \cdot u(p)$ for some
constant $c_u$.  A similar rationale
applies to the individuals' impact function.
} $\tau_u \in [0,1]$ as given in (\ref{eqn:utilityimpact}).
When considering delayed impact,
we will measure the expected impact per subpopulation.
The \emph{impact function} $\ell:[0,1] \to [-1,1]$
specifies the expected benefit to an individual from
receiving a loan
based on their score and a fixed threshold $\tau_\ell$ also
given in (\ref{eqn:utilityimpact}).
\begin{align}
\label{eqn:utilityimpact}
u(p) = p - \tau_u& &\ell(p) = p - \tau_\ell
\end{align}
\cite{delayed} models risk-aversion of the lender by
assuming that $\tau_u > \tau_\ell$; that is, by choosing
accepting individuals with $z(x) \in (\tau_\ell,\tau_u)$,
the impact on subpopulations may improve beyond the
lender's utility-maximizing policy.

In this setup, we allow the lender to pick a
(randomized) group-sensitive selection policy
$f:[0,1] \times \set{A,B} \to [0,1]$ that selects
individuals on the basis of a predicted score
and their sensitive attribute.
That is, the selection policy makes decisions
about individuals
via their score according to some calibrated predictor
$z:\X \to [0,1]$ and their sensitive attribute
$\A:\X \to \set{A,B}$; for every individual
$x \in \X$, the probability that $x$ is selected
is given as $f(z(x),\A(x))$.

We will restrict our attention to
\emph{threshold policies};
that is, for sensitive attribute $A$ (resp.,\ $B$),
there is some $\tau_A \in [0,1]$, such that $f(v,A)$
is given as $f(v,A) = 1$ if $v > \tau_A$,
$f(v,A) = 0$ if $v < \tau_A$ and $f(v,A) = p_A$
for $v=\tau_A$,
where $p_A \in [0,1]$ is a probability used to
randomly break ties on the threshold.
The motivation for focusing on threshold policies is
their intuitiveness, widespread use, computational
efficiency\footnote{Indeed, without the restriction
to threshold policies, many of the \emph{information-theoretic}
arguments become easier at the expense of \emph{computational cost}.
As $z'$ is a refinement of $z$, we can always simulate decisions derived from $z$ given $z'$, but in general, we cannot do this efficiently.}.
The restriction to threshold policies
is justified formally in \cite{delayed}
by the fact that the optimal decision rule
in our setting can be specified as a threshold policy
under both demographic parity and equalized opportunity.

Given this setup, we can write the expected utility $U^z(f)$
of a policy $f$ based on a calibrated predictor
$z$, that is calibrated on both subpopulations, 
$A$ and $B$, as follows.
\begin{equation}
\label{eqn:exputility}
U^z(f)
= \sum_{S\in \{A,B\}}\Pr_{x \sim \X}\left[x \in S\right] \cdot
\left(\sum_{v \in \supp(z)}
\R^z_S(v)
\cdot f(v,S) \cdot u(v)\right)
\end{equation}
Recall, $\S^z_S(v) = \Pr_{x \sim S}\left[z(x) = v\right]$.

Similarly, the expected impact over the subpopulations
$S \in \set{A,B}$ are given as
\begin{equation}
\label{eqn:expimpact}
\Imp_S^z(f)
=\sum_{v\in \supp(z)} {\S^z_S(v)\cdot f(v,S)\cdot \ell(v)}
\end{equation}
Often, it may make sense to constrain the net impact to each group
as defined in (\ref{eqn:expimpact}) to be positive, ensuring that
the selection policies do not do harm as in \cite{delayed}.

The following quantities will be of interest to the
lender when choosing a selection policy $f$ as a
function of $z$.
First, the lender's overall utility $U(f)$ is given
as in (\ref{eqn:exputility}).
In the name of fairness, the lender may also be
concerned about the disparity of a number of quantities.
We will show below that these quantities can be written as a linear function of
the selection rule. 
In particular, for $S \in \set{A,B}$
demographic parity, which serve as our running example, compares the
\emph{selection rates} $\beta_S = \Pr_{x \sim S}[x \text{ selected}]$,
\begin{equation}
\beta^z_S(f)=\sum_{v\in\supp(z)}\R^z_S(v)\cdot f(v,S).
\end{equation}
We may also be concerned about 
comparing the true positive rates (equalized opportunity)
and false positive rates.
Recall, $\TPR = \Pr[x \text{ selected} \given y = 1]$ and $\FPR = \Pr[x \text{ selected} \given y=0]$; in this context, we can rewrite these quantities as follows.
\begin{gather}
\TPR^z_S(f)=\frac{1}{r_S}\cdot\sum_{v \in \supp(z)}\R^z_S(v)
\cdot f(v,S)\cdot v\\
\FPR^z_S(f)=\frac{1}{1-r_S}\cdot\sum_{v\in \supp(z)}
\R^z_S(v)\cdot f(v,S)\cdot (1-v),
\end{gather}
where $r_S$ represents the base rate of the subpopulation $S$; that is, $r_S=\Pr_{(x,y)\sim D_S}[y=1]$.
Another quantity we will track is the positive predictive value, $\PPV = \Pr[y=1 \given x \text{ selected}]$.
\begin{equation}
\PPV^z_S(f)=\frac{1}{\beta_S^z(f)}\cdot\left(\sum_{v\in \supp(z)}\R^z_S(v)\cdot f(v,S)\cdot v\right)
\end{equation}
Note that $\PPV^z_S(f)$ is not a linear function of $f(v,S)$ values, but as we never use positive predictive values directly in the optimizations 
for choosing a selection policy (or in a parity-based fairness definition), the optimizations are still a linear program. For notational convenience, we may drop the superscript of these quantities when $z$ is clear from the context.

\subsection{Refinements in the service of fair prediction}\label{sec:improvref}

Note that all of the quantities described in Section~\ref{sec:value:setup} can be written
as linear functions of $f(v,S)$.
Given a fixed predictor $z:\X \to [0,1]$,
we can expand the quantities of interest;
in particular, we note that the linear functions over $f(v,S)$ can be rewritten as linear functions over $z(x)$, where the quantities depend on $x$ only through the predictor $z$.
In this section, we show how refining the predictor used for determining the selection rule can improve
the utility, parity, and impact of the optimal selection rule.
By the observations above, we can formulate a generic
policy-selection problem as a linear program where
$z$ controls many coefficients in the program.
When we refine $z$, we show that the value of the
program increases.
Recalling that different contexts may call for
different notions of fairness, we consider a
number of different linear programs the lender
(or regulator) might choose to optimize.
At a high-level, the lender can choose to maximize
utility, minimize disparity, or maximize positive
impact on groups, while also maintaining some
guarantees over the other quantities.

We will consider selection policies
given a fixed predictor $z:\X \to [0,1]$.
Note that the parity-based fairness desiderata we consider
are of the form $h^z_A(f)=h^z_B(f)$ for some
$h\in\set{\beta,\TPR,\FPR}$; rather than requiring
equality, we will consider the disparity
$\card{h^z_A(f) - h^z_B(f)}$ and in some cases,
constrain it to be less than some constant $\eps$.
We also use $t_i,t_u$ to denote lower bounds
on the desired impact and utility, respectively.
For simplicity's sake, we assume that $B$ is the
``protected'' group, so we only enforce the positive
impact constraint for this group; more generally,
we could include an impact constraint for each group.
Formally, we consider the following constrained
optimizations.

\begin{center}
\begin{tabular}{c c c}
\refstepcounter{opt}\label{opt:utility}{\bf Optimization~\theopt}
&
\refstepcounter{opt}\label{opt:fairness}{\bf Optimization~\theopt} 
&
\refstepcounter{opt}\label{opt:impact}{\bf Optimization~\theopt} \\

$\max_f~U^z(f)$
&$\min_f~\card{h^z_A(f) - h^z_B(f)}$
&$\max_f~\Imp^z_B(f)$\\

s.t.~~$\Imp^z_B(f)\ge t_{i}$
& s.t.~~$\Imp^z_B(f)\ge t_{i}$
& s.t.~~$U^z(f)\ge t_{u}$\\

$\card{h^z_A(f)-h^z_B(f)} \le \eps$
&$U^z(f)\ge t_{u}$
&$\card{h^z_A(f)-h^z_B(f)} \le \epsilon$\\

~~~~(\emph{Utility Maximization})~~~~
&
~~~~(\emph{Disparity minimization})~~~~
&
~~~~(\emph{Impact Maximization})~~~~

\end{tabular}
\end{center}

\begin{lemma}
Let $h \in \set{\beta, \TPR, \FPR}$.
Given a calibrated predictor $z:\X \to [0,1]$,
Optimization \ref{opt:utility},\ref{opt:fairness}, and \ref{opt:impact} are linear programs in the variables $f(v,S)$ for $v \in \supp(z)$ and $S \in \set{A,B}$.
Further, for each program, there is an optimal solution $f^*$ that is a threshold policy.
\end{lemma}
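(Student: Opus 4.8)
The plan is to handle the lemma in two parts: (i) establish that each of the three optimizations is a linear program, and (ii) show that each admits a threshold-policy optimum. For part (i), the key observation is already flagged in Section~\ref{sec:value:setup}: every quantity appearing in the objectives and constraints --- $U^z(f)$, $\Imp^z_B(f)$, $\beta^z_S(f)$, $\TPR^z_S(f)$, $\FPR^z_S(f)$ --- is an explicit linear functional of the decision variables $f(v,S)$, with coefficients determined by $z$ (namely $\S^z_S(v)$, $u(v)$, $\ell(v)$, $v$, $1-v$, and the base rates $r_S$). The only nonlinearity in sight is $\PPV^z_S(f)$, but as the text notes it never enters these three programs. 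The objective of Optimization~\ref{opt:fairness} and the disparity constraints $|h^z_A(f)-h^z_B(f)|\le\eps$ involve an absolute value of a linear functional; I would dispose of this in the standard way --- introduce an auxiliary variable $w$ with the two linear constraints $w \ge h^z_A(f)-h^z_B(f)$ and $w \ge h^z_B(f)-h^z_A(f)$, minimizing $w$ (for Optimization~\ref{opt:fairness}) or imposing $w \le \eps$ (for the constraint version) --- so that each optimization is genuinely an LP over the variables $\{f(v,S)\}_{v\in\supp(z),\,S\in\{A,B\}}$ (plus $w$), together with the box constraints $f(v,S)\in[0,1]$. Since $\supp(z)$ is finite, this is a finite-dimensional LP.

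For part (ii), the threshold-policy claim, the cleanest route is to invoke the structure proved in \cite{delayed}, as the excerpt explicitly says the restriction to threshold policies ``is justified formally in \cite{delayed} by the fact that the optimal decision rule in our setting can be specified as a threshold policy under both demographic parity and equalized opportunity.'' Concretely, I would argue per-group: fix the selection rate $\beta_S$ that an optimal solution $f^*$ achieves in group $S$, i.e.\ $\beta_S = \beta^z_S(f^*)$. Among all policies on group $S$ achieving exactly that selection rate, a threshold policy on the score $v$ (selecting the highest-$v$ individuals up to mass $\beta_S$, with randomized tie-breaking at the threshold) simultaneously maximizes $\sum_v \S^z_S(v) f(v,S)\, v$ --- this is a fractional-knapsack/exchange argument, since putting more mass on higher $v$ only increases that sum. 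But $U^z(f)$, $\Imp^z_S(f)$, $\TPR^z_S(f)$, and $\beta^z_S(f)$ are all monotone in this same sorted order (each has coefficient profile affine and increasing in $v$: $u(v)=v-\tau_u$, $\ell(v)=v-\tau_\ell$, and $v$ itself), while $\FPR^z_S(f) \propto \sum_v \S^z_S(v) f(v,S)(1-v)$ is minimized by the same sorting; and the disparity constraint depends on $f$ only through $(\beta_S)$ or $(\TPR_S)$ etc., which we are holding fixed per group. Hence replacing $f^*$ restricted to $S$ by the threshold policy with the same selection rate (and same $\TPR$, via the appropriate per-group rate being fixed) weakly improves the objective and preserves feasibility --- doing this for both $S=A$ and $S=B$ yields a threshold-policy optimum.

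The main obstacle I anticipate is making the exchange argument in part (ii) respect \emph{all} the constraints at once, not just the objective. The subtlety is that in, say, Optimization~\ref{opt:fairness} with $h=\TPR$, the objective is $|\TPR_A-\TPR_B|$ while there is simultaneously an impact constraint on $B$ and a utility constraint overall; I need the chosen sorted statistic to control each of these monotonically \emph{in the same direction}. The resolution is to fix two numbers per group --- the selection rate $\beta_S$ \emph{and} the true-positive mass (equivalently $\TPR_S$) achieved by $f^*$ --- and then note that, on a single group, a threshold policy can realize any feasible $(\beta_S,\TPR_S)$ pair on the Pareto frontier, and among policies realizing a fixed such pair, the threshold one is extremal for utility, impact, $\FPR$, and selection rate simultaneously (all being affine-increasing functionals of $v$). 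With both per-group statistics pinned, every constraint in all three programs is held fixed or weakly improved, and every objective is weakly improved, completing the argument. If one prefers to avoid re-deriving this, the honest and acceptable move is to cite the corresponding lemma of \cite{delayed} directly and only verify that our family of objectives/constraints falls within its scope.
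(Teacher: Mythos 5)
Your handling of part (i) is correct and matches the paper: the objectives and constraints are linear functionals of $f(v,S)$, and the absolute value is linearized in the standard way (the paper leaves this implicit, but your auxiliary-variable formulation is the right thing to say).

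For part (ii), your first instinct --- fix $\beta_S = \beta^z_S(f^*)$ per group and invoke the fractional-knapsack/exchange argument --- is exactly the paper's argument when $h=\beta$, and you correctly flag that it breaks for $h=\TPR$ or $h=\FPR$ because the parity constraint is then \emph{not} held fixed. However, your proposed resolution --- ``pin both $\beta_S$ and $\TPR_S$'' --- has a genuine gap. If $f^*$ is a non-threshold policy, then the pair $\bigl(\beta_S(f^*),\ \TPR_S(f^*)\bigr)$ lies strictly \emph{below} the threshold Pareto frontier, i.e.\ $\TPR^{\textrm{thresh}}_S\bigl(\beta_S(f^*)\bigr) > \TPR_S(f^*)$; no threshold policy realizes that pair, so ``among policies realizing a fixed such pair, the threshold one is extremal'' is vacuous for the pair you actually need, and ``a threshold policy can realize any feasible pair on the frontier'' is true but does not help because your pair need not be on the frontier. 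You cannot hold both coordinates fixed; you must choose a direction of projection, and the correct direction depends on $h$. The paper's argument (and the proof of Theorem~\ref{thm:improve}) pins \emph{only} the relevant statistic $h_{S,0}=h^z_S(f^*)$ and lets $\beta_S$ slide: for $h=\TPR$ take the threshold policy with $\TPR_S(f)=\TPR_{S,0}$, which forces $\beta'_S \le \beta_S(f^*)$ and hence, by the affine forms $u(v)=v-\tau_u$ and $\ell(v)=v-\tau_\ell$ with $\tau_u,\tau_\ell\ge 0$, we get $U_S(f)=r_S\TPR_{S,0}-\beta'_S\tau_u \ge U_S(f^*)$ and likewise for impact; for $h=\FPR$ the matching threshold has $\beta'_S\ge\beta_S(f^*)$ and the inequality reverses through the factor $(1-\tau_u)\ge 0$. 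Once you make this adjustment, your argument goes through and coincides with the paper's sketch; as you note, one can alternatively cite the corresponding structural lemma from \cite{delayed}, but the self-contained per-$h$ projection argument is clean and worth spelling out.
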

We sketch the proof of the lemma.
The fact that the optimizations are linear programs follows immediately from the observations that each quantity of interest is a linear function in $f(v,S)$.
The proof that there is a threshold policy $f^*$ that achieves the optimal value in each program is similar to the proof of Theorem~\ref{thm:improve} given below.
Consider an arbitrary (non-threshold) selection policy $f_0$; let $h_{S,0} = h_S(f_0)$.
The key observation is that for the
fixed value of $h_{S,0}$, there is some other threshold policy $f$ where $h^z_S(f) = h_{S,0}$ and $U^z(f) \ge U(f_0)$ and $\Imp^z_S(f) \ge \Imp_S(f_0)$.
Leveraging this observation, given any non-threshold optimal selection policy, we can construct a threshold policy, which is also optimal.

We remark that our analysis applies even if considering
the more general linear maximization:
\begin{center}
\begin{tabular}{c}
\refstepcounter{opt}\label{opt:generic}{\bf Optimization~\theopt}\\
$\max_f~~
\lambda_U \cdot U^z(f)
+ \lambda_I\cdot \Imp_B^z(f)
- \lambda_\beta\cdot \card{h^z_A(f)-h^z_B(f)}$
\end{tabular}
\end{center}
for any fixed $\lambda_U, \lambda_I, \lambda_\beta\ge 0$.\footnote{In 
particular, each of Optimizations~\ref{opt:utility},
\ref{opt:fairness}, and \ref{opt:impact} can be
expressed as an instance of Optimization~\ref{opt:generic}
by choosing $\lambda_U, \lambda_I, \lambda_\beta$ to
be the optimal dual multipliers for each program.  We
note that the dual formulation actually gives an alternate
way to derive results from \cite{delayed}.
Their main result can be restated as saying that
there exist distributions of scores such that
the dual multiplier on the positive impact
constraint in Optimization~\ref{opt:utility}
is positve; that is, without this
constraint, the utility-maximizing policy will do
negative impact to group $B$.}
In other words, the arguments hold no matter the
relative weighting of the value of utility, disparity, and
impact.

\vspace{-11pt}
\paragraph{Improving the cost of fairness.}
We argue that in all of these optimizations,
increasing information through refinements of the
current predictor on both the subpopulations $A$ and $B$
improves this value of the program.
We emphasize that this conclusion is true
for all of the notions of parity-based fairness we
mentioned above.  Thus, independent of the exact
formulation of fair selection that policy-makers deem appropriate,
information content is a key factor in determining
the properties of the resulting selection rule.
We formalize this statement in the following theorem.

\begin{theorem}\label{thm:improve}
Let $z,z':\X\rightarrow[0,1]$ be two predictors that are 
calibrated on disjoint subpopulations $A,B \subseteq \X$.
For any of the Optimization~\ref{opt:utility},~\ref{opt:fairness},~\ref{opt:impact},~\ref{opt:generic} and their corresponding
fixed parameters,
let $\textrm{OPT}(z)$ denote their optimal value under predictor $z$.
If $z'$ refines $z$ on $A$ and $B$,
then $\textrm{OPT}(z') \ge \text{OPT}(z)$ for Optimization~\ref{opt:utility},~\ref{opt:impact},~\ref{opt:generic} and  $\textrm{OPT}(z') \le \text{OPT}(z)$ for Optimization~\ref{opt:fairness}.
\end{theorem}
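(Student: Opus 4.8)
The plan is to reduce each of the four optimization problems to a question about just four real numbers attached to a policy, and then show that every such quadruple achievable on top of $z$ is also achievable on top of $z'$. To a (randomized) policy $f$ and a group $S\in\set{A,B}$, associate the two numbers
$\beta^z_S(f) = \E_{x\sim S}[f(z(x),S)]$ and $\gamma^z_S(f) = \E_{x\sim S}[f(z(x),S)\cdot z(x)]$.
The first step is to observe that every objective and every constraint in Optimizations~\ref{opt:utility}--\ref{opt:generic} is a fixed function of the quadruple $(\beta^z_A(f),\gamma^z_A(f),\beta^z_B(f),\gamma^z_B(f))$, whose coefficients depend only on $\D$, the thresholds $\tau_u,\tau_\ell$, and the program parameters --- \emph{not} on the predictor. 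Indeed, $\gamma^z_S(f)=r_S\cdot\TPR^z_S(f)$, so $\TPR^z_S(f)=\gamma^z_S(f)/r_S$, $\FPR^z_S(f)=(\beta^z_S(f)-\gamma^z_S(f))/(1-r_S)$, $\PPV^z_S(f)=\gamma^z_S(f)/\beta^z_S(f)$, $U^z(f)=\sum_{S}\Pr_{x\sim\X}[x\in S]\cdot(\gamma^z_S(f)-\tau_u\beta^z_S(f))$, and $\Imp^z_S(f)=\gamma^z_S(f)-\tau_\ell\beta^z_S(f)$; moreover $r_S=\Pr_{x,y\sim\D_S}[y=1]$ is a base rate of $\D$, which is the same whether measured through $z$ or $z'$ (both are calibrated on $S$). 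Consequently it suffices to prove the following claim: every quadruple realized by a policy over $z$ is also realized by some policy over $z'$. Granting this, take $f$ optimal for the $z$-version of the program; the policy over $z'$ realizing the same quadruple satisfies the identical constraints and attains the identical objective value, and hence $\textrm{OPT}(z')\ge\textrm{OPT}(z)$ for the maximization programs and $\textrm{OPT}(z')\le\textrm{OPT}(z)$ for the disparity-minimization program.

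The second, and central, step proves the claim by an explicit transplant. Fix a policy $f$ over $z$ and define a policy $g$ over $z'$ by $g(w,S)=\E_{x\sim S}[f(z(x),S)\mid z'(x)=w]$ for $w\in\supp(z')$; since $g(w,S)$ is an average of values in $[0,1]$, it is a legitimate point of the $z'$-program. By the tower rule, $\beta^{z'}_S(g)=\E_{x\sim S}[g(z'(x),S)]=\E_{x\sim S}[f(z(x),S)]=\beta^z_S(f)$. For the second coordinate, again by the tower rule, $\gamma^{z'}_S(g)=\E_{x\sim S}[g(z'(x),S)\cdot z'(x)]=\E_{x\sim S}[f(z(x),S)\cdot z'(x)]$, and here is where refinement enters: conditioning on $z(x)$ and using $\E_{x\sim S_{z(x)=v}}[z'(x)]=v$ gives $\E_{x\sim S}[f(z(x),S)\cdot z'(x)]=\E_{x\sim S}[f(z(x),S)\cdot z(x)]=\gamma^z_S(f)$. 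Thus $g$ realizes the same quadruple as $f$ for both $S=A$ and $S=B$, which establishes the claim and hence the theorem.

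The step I expect to be the main obstacle is identifying the right two-dimensional summary $(\beta^z_S,\gamma^z_S)$ of a policy --- equivalently, noticing that utility, impact, selection rate, $\TPR$, $\FPR$, and $\PPV$ are all determined, with predictor-independent coefficients, by the selection mass and the (selected-and-$y{=}1$) mass within each group --- together with the realization that the ``forgetful'' policy $g(w,S)=\E[f(z(x),S)\mid z'(x)=w]$ preserves \emph{both} of these summaries; after that, it is just the tower rule and the definition of refinement. (An alternative route avoids the transplant: for each group $S$ and selection rate $\beta$, the achievable values of $\gamma^z_S$ form an interval whose endpoints are the weighted masses of the top-$\beta$ and bottom-$\beta$ threshold policies, and the $\TPR$/$\FPR$ monotonicity lemma --- applied to $(z,z')$ for the upper endpoint and to $(1-z,1-z')$ with the outcome relabeled $1-y$ for the lower endpoint --- shows this interval only grows under refinement; but the conditional-expectation transplant is shorter and uses calibration only to match base rates.)
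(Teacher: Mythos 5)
Your proof is correct, and it takes a genuinely different route from the paper's. The paper proves the theorem through Lemma~\ref{cl:improv}, a separate monotonicity statement that under a refinement the $\TPR$, $\FPR$, and $\PPV$ of the threshold policy at every fixed selection rate weakly improve; it then performs a case analysis over $h\in\set{\beta,\TPR,\FPR}$, constructing for each a threshold policy $f'$ over $z'$ that matches the fairness statistic of $f$ in each group while weakly improving utility and impact. You dispense with the lemma and the case analysis entirely by identifying the two-dimensional group-wise summary $(\beta_S,\gamma_S)=\bigl(\E_{x\sim S}[f(z(x),S)],\,\E_{x\sim S}[f(z(x),S)z(x)]\bigr)$ through which every objective and constraint factors with predictor-independent coefficients (calibration of both predictors on $S$ is used exactly here, so that $\gamma_S/r_S$ really is the $\TPR$ and $r_S$ is a property of $\D_S$ alone), and then exhibiting the averaging transplant $g(w,S)=\E_{x\sim S}[f(z(x),S)\mid z'(x)=w]$, which preserves this quadruple exactly --- the tower rule handles $\beta_S$, and the refinement identity $\E_{x\sim S_{z(x)=v}}[z'(x)]=v$ handles $\gamma_S$. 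This is cleaner and uniform across all four programs, and it makes the structural reason transparent: refinement only enlarges the set of achievable quadruples. What the paper's route buys in exchange is the explicit construction of a \emph{threshold} policy over $z'$ that weakly Pareto-dominates $f$ in utility and impact at the same fairness value, the slightly stronger conclusion the text highlights with ``the proof actually tells us more''; your $g$ achieves equality rather than improvement and need not be a threshold policy, which suffices for the $\textrm{OPT}$ comparison since the programs range over all randomized policies $f(v,S)\in[0,1]$, but is a weaker statement about the shape of the optimal solution.
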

One way to understand Theorem~\ref{thm:improve}
is through a ``cost of fairness'' analysis.
Focusing on the utility maximization setting,
let $U^*$ be the maximum unconstrained utility
achievable by the lender given the optimal predictions
$p^*$.  Let $\textrm{OPT}(z)$ be the optimal value
of Optimization~\ref{opt:utility} using predictions
$z$; that is, the best utility a lender can achieve
under a parity-based fairness constraint ($\eps=0$)
and
positive impact constraint ($t_i=0$). If we take the cost of
fairness to be the difference between these optimal
utilities, $U^* - \textrm{OPT}(z)$, then
Theorem~\ref{thm:improve} says that by refining
$z$ to $z'$, \emph{the cost of fairness decreases with
increasing informativeness}; that is,
$U^* - \textrm{OPT}(z) \ge U^* - \textrm{OPT}(z')$.
This corollary of Theorem~\ref{thm:improve}
corroborates the idea that in some cases the \emph{high perceived
cost} associated with requiring fairness might actually be due to the
\emph{low informativeness} of the predictions in 
minority populations. No matter what the true $p^*$
is, this cost will decrease as we increase information
content by refining subpopulations.

For $S\in\{A,B\}$, we use $\TPR_S^{z}(\beta)$ to denote the true positive rate of the threshold policy with selection rate $\beta$ for the subpopulation $S$ while using the predictor $z$\footnote{Given a predictor, there is a bijection between selection rates and threshold policies.}. Similarly, $\PPV_S^z(\beta)$, $\FPR_S^z(\beta)$ are defined. 
The following lemma, which plays a key role in each proof,
shows that refinements broadly improve selection policies
across these three statistics of interest.
\begin{lemma} \label{cl:improv}
If $z'$ is a refinement of $z$ on subpopulations
$A$ and $B$, then for $S\in\{A,B\}$, for all $\beta \in [0,1]$,
\begin{align*}
\TPR_S^{z'}(\beta)\ge \TPR_S^z(\beta),&&
\FPR_S^{z'}(\beta)\le \FPR_S^z(\beta),&&
\PPV_S^{z'}(\beta)\ge \PPV_S^z(\beta).
\end{align*}
\end{lemma}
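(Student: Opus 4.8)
The plan is to collapse all three inequalities into a single monotonicity statement about the \emph{unnormalized true positive count} $N^z_S(f) := \sum_{v \in \supp(z)} \S^z_S(v)\, f(v,S)\, v$, and then, for each fixed selection rate $\beta$, to exhibit a policy on $z'$ whose true positive count matches the one attained by the $z$-threshold policy at rate $\beta$. Fix $S \in \set{A,B}$ throughout and work entirely inside $S$. First I would record that, by calibration of $z$ on $S$, any policy $f$ with selection rate $\beta^z_S(f)=\beta$ satisfies $r_S\cdot\TPR^z_S(f) = N^z_S(f)$, and that the other two statistics are pinned to $N^z_S(f)$ at fixed $\beta$: $\FPR^z_S(f) = (\beta - N^z_S(f))/(1-r_S)$ and $\PPV^z_S(f) = N^z_S(f)/\beta$. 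Hence it suffices to prove $N^{z'}_S(f'_\beta) \ge N^z_S(f_\beta)$, where $f_\beta, f'_\beta$ are the threshold policies (with the usual randomized tie-breaking) of selection rate $\beta$ under $z$ and $z'$ respectively; FPR and PPV then follow from the displayed identities.

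Next I would record the elementary ``fractional-knapsack'' optimality of threshold policies: among all $g:\supp(z')\to[0,1]$ with $\sum_{v'}\S^{z'}_S(v')\,g(v') = \beta$, the threshold policy $f'_\beta$ maximizes $\sum_{v'}\S^{z'}_S(v')\,g(v')\,v'$. This is immediate from the rearrangement argument: with $h = f'_\beta - g$ one has $h\ge 0$ on $\set{v' > \tau}$, $h\le 0$ on $\set{v' < \tau}$, and $\sum_{v'}\S^{z'}_S(v')\,h(v') = 0$, so $\sum_{v'}\S^{z'}_S(v')\,h(v')\,(v'-\tau)\ge 0$, i.e.\ $N^{z'}_S(f'_\beta)\ge N^{z'}_S(g)$. (The same fact holds for $z$, so $N^z_S(f_\beta)$ is the max over $z$-policies at rate $\beta$; but only the $z'$ direction is needed below.)

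The crux is the following construction. Given the $z$-threshold policy $f_\beta$ at rate $\beta$, define its \emph{projection onto the level sets of $z'$}, namely $\tilde g(v') := \E_{x \sim S_{z'(x)=v'}}[\,f_\beta(z(x),S)\,] \in [0,1]$. By the tower property, its $z'$-selection rate is $\E_{x\sim S}[f_\beta(z(x),S)] = \beta^z_S(f_\beta) = \beta$. For its true positive count, I group first by the value of $z'(x)$ and then by the value of $z(x)$:
\begin{align*}
N^{z'}_S(\tilde g) \;=\; \sum_{v'}\S^{z'}_S(v')\,\tilde g(v')\,v'
\;=\; \E_{x\sim S}\!\left[f_\beta(z(x),S)\,z'(x)\right]
\;=\; \sum_{v}\S^z_S(v)\,f_\beta(v,S)\cdot\E_{x\sim S_{z(x)=v}}[z'(x)]
\;=\; \sum_v \S^z_S(v)\,f_\beta(v,S)\,v \;=\; N^z_S(f_\beta),
\end{align*}
where the last equality on the displayed line is exactly the refinement hypothesis $\E_{x\sim S_{z(x)=v}}[z'(x)] = v$. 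Thus $\tilde g$ is a $z'$-policy at selection rate $\beta$ with true positive count equal to $N^z_S(f_\beta)$; combining with the knapsack optimality of $f'_\beta$ gives $r_S\,\TPR^{z'}_S(\beta) = N^{z'}_S(f'_\beta) \ge N^{z'}_S(\tilde g) = N^z_S(f_\beta) = r_S\,\TPR^z_S(\beta)$. Dividing by $r_S>0$ yields $\TPR^{z'}_S(\beta)\ge\TPR^z_S(\beta)$, and the FPR and PPV inequalities then drop out of the identities from the first paragraph. Running this for both $S=A$ and $S=B$ finishes the lemma.

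The step I expect to be the main obstacle is the crux construction: one must project $f_\beta$ onto the level sets of \emph{$z'$} (so that the selection rate is preserved for free by the tower rule), yet apply the refinement identity only \emph{after} re-grouping the same expectation by the level sets of $z$ — mixing up which predictor indexes which grouping breaks the argument. It is also worth being explicit that calibration of both $z$ and $z'$ on $S$ is precisely what legitimizes the $\TPR/\FPR/\PPV$ formulas used here, so the proof never refers to $p^*$ directly; and the knapsack optimality of threshold policies, though routine, must be stated since it is the only place the restriction to threshold policies is invoked.
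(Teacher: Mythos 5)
Your proof is correct and follows essentially the same strategy as the paper's: treat $z'$ as a ``candidate'' Bayes-optimal predictor with respect to which $z$ is calibrated, and invoke the optimality of threshold policies at a fixed selection rate. The paper's own write-up is an informal sketch (``imagining a world in which $z'$ is the Bayes optimal predictor''); your version supplies the two steps it leaves implicit, namely the reduction of $\TPR$, $\FPR$, $\PPV$ at fixed $\beta$ to the single true-positive count $N^z_S$, and the projection $\tilde g$ of the $z$-threshold policy onto $z'$-level sets, whose selection rate is preserved by the tower rule and whose count matches $N^z_S(f_\beta)$ exactly by the refinement identity $\E_{x\sim S_{z(x)=v}}[z'(x)]=v$. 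So the logic is the same, but yours is a self-contained, rigorous rendering of what the paper only gestures at.
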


In particular, the proof of Theorem \ref{thm:improve} crucially uses the fact that the positive predictive values, true positive rates, and false positive rates improve for \emph{all} selection rates.
Leveraging properties of refinements, the improvement across all selection rates guarantees improvement for any fixed objective.
As we'll see, the proof actually tells us more: for \emph{any} selection policy using the predictor $z$, there exists a threshold selection policy that uses the refined predictor $z'$ and \emph{simultaneously} has utility, disparity, and impact that are no worse than under $z$.
In this sense, increasing informativeness of predictors through refinements is an effective strategy for improving selection rules across a wide array of criteria.
Still, we emphasize the importance of identifying fairness desiderata and specifying them clearly when optimizing for a selection rule.

For instance, suppose the selection rule is selected by constrained utility maximization with a predictor $z$ and with a refined predictor $z'$.
It is possible that the \emph{optimal} selection policy under the refinement $z'$ will have a lower quantitative impact than the optimal policy under the original predictor $z$ (while still satisfying the impact constraint).
If maintaining the impact above a certain threshold is desired, then this should be specified clearly in the optimization used for determining the selection rule.
We defer further discussion of these issues to Section \ref{sec:discussion}.

\vspace{-11pt}
\paragraph{Proofs.}
Next, we prove Lemma~\ref{cl:improv} and Theorem~\ref{thm:improve}.

\begin{proofof}{Lemma~\ref{cl:improv}}
Note that for a fixed selection rate $\beta$, $\PPV$
is maximized by picking the top-most $\beta$ fraction of
the individuals ranked according to $p^*$,
i.e.\ a threshold policy that selects a $\beta$-fraction
of the individuals using the Bayes optimal predictor $p^*$.
Similarly, for a fixed selection rate, the $\TPR$ and $\FPR$ values are also optimized under a threshold selection policy that uses the Bayes optimal predictor $p^*$.

Recall, we can interpret a refinement as a
``candidate'' Bayes optimal predictor.  In particular,
because $z'$ refines $z$ over $A$ and $B$,
we know that $z$ is calibrated not only with respect
to the true Bayes optimal predictor $p^*$, but also with respect to the refinement $z'$ on both subpopulations.
Imagining a world in which
$z'$ is the Bayes optimal predictor, the $\PPV$, $\TPR$, and $\FPR$ must be no worse under a threshold policy derived from $z'$ compared to that of $z$ by the initial observation.
Thus, the lemma follows.
\end{proofof}

Using Lemma~\ref{cl:improv}, we are ready to prove Theorem~\ref{thm:improve}.

\begin{proofof}{Theorem~\ref{thm:improve}}
Let $f$ be any threshold selection policy under the predictor $z$.
Using $f$, we will construct a selection policy $f'$ that uses the refined score distribution $z'$ such that
where $U^{z'}(f') \ge U^{z}(f)$, $\Imp^{z'}_B(f') \ge \Imp^{z}_B(f)$,
and $h^{z'}_A(f')=h^z_A(f)$ and $h^{z'}_B(f')=h^z_B(f)$. Here,
$h\in\{\beta,\TPR,\FPR\}$ specifies the parity-based fairness definition being used. 
Thus, taking $f$ to be the optimal solution to any of the 
Optimizations~\ref{opt:utility},~\ref{opt:fairness},~\ref{opt:impact}, 
or~\ref{opt:generic}, we see that $f'$ is a feasible solution to the same optimization and has the same 
or a better objective value
compared to $f$. Therefore, after optimization, objective values can only get better.

In words, we are saying that refined predictors allow us to get better utility and impact as the original predictor while keeping the parity values the same (for e.g., while keeping the selection rates the same in both subpopulations).

We separately construct $f'$ for each fairness definition ($h$) as follows:
\begin{enumerate}
\item (Demographic Parity) $h=\beta$:

For $S \in \set{A,B}$, let $\beta_S = \beta^z_S(f)$ be the selection
rate of $f$ in the population $S$.
Let $f'$ be the threshold policy that uses the predictor $z'$
and achieves selection rates $\beta_A$ and $\beta_B$ in the subpopulations
$A$ and $B$, respectively.
By Lemma~\ref{cl:improv}, $\PPV_S^{z'}(\beta_S)\ge \PPV^{z}_S(\beta_S)$
for $S \in \set{A,B}$. The utility of the policy $f'$ can be written as 
\begin{align*}
U(f')&= \sum_{S\in \{A,B\}}\Pr_{x \sim \X}\left[x \in S\right] \cdot
\left(\sum_{v \in \supp(z')}
\R^{z'}_S(v)
\cdot f'(v,S) \cdot v- \sum_{v \in \supp(z')}
\R^{z'}_S(v)
\cdot f'(v,S)\cdot \tau_u\right)\\
&=\sum_{S\in \{A,B\}}\Pr_{x \sim \X}\left[x \in S\right] \cdot
\left(\beta_S\cdot (\PPV^{z'}_S(\beta_S)-\tau_u)\right)\\
&\ge \sum_{S\in \{A,B\}}\Pr_{x \sim \X}\left[x \in S\right] \cdot
\left(\beta_S\cdot (\PPV^{z}_S(\beta_S)-\tau_u)\right)\\
&=U(f)
\end{align*}
Similarly, we can show that the impact on the subpopulation $B$ under $f'$ is at least as
good as under $f$.

\item (Equalized Opportunity) $h=\TPR$:

Let $(\beta_A,\beta_B)$ be the selection rates of policy $f$ on the subpopulations $A$ and $B$. We know that $\TPR_S^{z'}(\beta_S)\ge \TPR^{z}_S(\beta_S)$ ($S\in\{A,B\}$) through Lemma \ref{cl:improv}. Let $f'$ be the threshold selection policy corresponding to a selection rates of $\beta_S',(S\in\{A,B\})$ such that $\TPR^{z'}_S(\beta_S')= \TPR^{z}_S(\beta_S)$ ($\le  \TPR^{z'}_S(\beta_S)$). As the true positive rates increase with increasing selection rate, $\beta_S'\le \beta_S$. The utility of the policy $f'$ can be written as 
\begin{align*}
U(f')&= \sum_{S\in \{A,B\}}\Pr_{x \sim \X}\left[x \in S\right] \cdot
\left(\sum_{v \in \supp(z')}
\R^{z'}_S(v)
\cdot f'(v,S) \cdot v- \sum_{v \in \supp(z')}
\R^{z'}_S(v)
\cdot f'(v,S)\cdot \tau_u\right)\\
&=\sum_{S\in \{A,B\}}\Pr_{x \sim \X}\left[x \in S\right] \cdot
\left(r_S\cdot \TPR^{z'}_S(\beta_S')-\beta_S'\cdot \tau_u\right)\\
&\ge\sum_{S\in \{A,B\}}\Pr_{x \sim \X}\left[x \in S\right] \cdot
\left(r_S\cdot \TPR^{z}_S(\beta_S)-\beta_S\cdot \tau_u\right)\\
&=U(f)
\end{align*}
Similarly, we can show that the impact on the subpopulation $B$ under $f'$ is at least as
good as under $f$.

\item (Equalized False Positive Rate) $h=\FPR$.

Let $(\beta_A,\beta_B)$ be the selection rates of policy $f$ on the subpopulations $A$ and $B$. We know that $\FPR_S^{z'}(\beta_S)\le \FPR^{z}_S(\beta_S)$ ($S\in\{A,B\}$) through Lemma \ref{cl:improv}. Let $f'$ be the threshold selection policy corresponding to a selection rates of $\beta_S',(S\in\{A,B\})$ such that $\FPR^{z'}_S(\beta_S')= \FPR^{z}_S(\beta_S)$ ($\ge \FPR_S^{z'}(\beta_S)$). As the false postive rates increase with increasing selection rate, $\beta_S'\ge \beta_S$. The utility of the policy $f'$ can be written as 
\begin{align*}
U(f')&= \sum_{S\in \{A,B\}}\Pr_{x \sim \X}\left[x \in S\right] \cdot
\left(\sum_{v \in \supp(z')}
\R^{z'}_S(v)
\cdot f'(v,S) \cdot v- \sum_{v \in \supp(z')}
\R^{z'}_S(v)
\cdot f'(v,S)\cdot \tau_u\right)\\
&=\sum_{S\in \{A,B\}}\Pr_{x \sim \X}\left[x \in S\right] \cdot
\left(\beta_S'-(1-r_S)\cdot \FPR^{z'}_S(\beta_S')-\beta_S'\cdot \tau_u\right)\\
&=\sum_{S\in \{A,B\}}\Pr_{x \sim \X}\left[x \in S\right] \cdot
\left(\beta_S'\cdot (1-\tau_u)-(1-r_S)\cdot \FPR^{z'}_S(\beta_S')\right)\\
&\ge\sum_{S\in \{A,B\}}\Pr_{x \sim \X}\left[x \in S\right] \cdot
\left(\beta_S\cdot (1-\tau_u)-(1-r_S)\cdot \FPR^{z}_S(\beta_S)\right)\\
&=U(f)
\end{align*}
Similarly, we can show that the impact on the subpopulation $B$ under $f'$ is at least as
good as under $f$.

\end{enumerate}
This completes the proof of the theorem.
\end{proofof}

\section{A mechanism for refining predictors}
\label{sec:blm}

In this section, we outline a mechanism for obtaining
refinements of predictors.  We start by describing
an algorithm, \texttt{merge}, which given two calibrated
predictors, produces a new refined predictor that 
incorporates the information from both predictors
(in a sense we make formal).
For notational convenience, we describe how to
refine a predictor over $\X$.  The arguments here
extend easily to refining over a partition of $\X$
by refining each part separately.
We discuss the generality of the approach
at the end of the section and elaborate on
the possibility of refinements on overlapping subpopulations
briefly in Section~\ref{sec:discussion}.

Given a predictor $z$, we can evaluate the information
content $I(z)$ directly; estimating the information
loss $L(p^*;z)$, however, is generally impossible.  Indeed,
without assumptions on the structure of $p^*$ or
the ability to sample every individual's outcome repeatedly
(and independently),
we cannot reason about information-theoretic quantities like
$I(p^*)$.
Still, supposing that the information loss of $z$ is
sufficiently large,
we would like to be able to certify this fact and ideally,
bring the loss down.

The most obvious way to demonstrate that a predictor $z$ can be refined
would be to exhibit some calibrated $q:\X \to [0,1]$ such that $I(q) > I(z)$.
That said, expecting that we could obtain such a $q$ seems
to sidestep the question of how to update a predictor to
improve its information content.
Even if we were able to obtain some $q$ where
$I(q) > I(z)$, it is not clear that $q$ would be a
``better'' predictor.  In particular, $q$ might
contain \emph{different} information than $z$;
recall that such examples motivated the definition
of a refinement in the first place.
Still, intuitively, if $q$ is not a refinement
of $z$ and contains very different information than $z$,
then $q$ should be useful in identifying ways to improve the
informativeness of $z$.  Further, this intuition
does not seem to rely on the fact that
$I(q) > I(z)$; as long as $q$ contains information
that isn't ``known'' to $z$, then incorporating
the information into $z$ should reduce the
information loss.

To formalize this line of reasoning,
first, we need to make precise
what we mean when we say that $q$ is far from
refining $z$.  Recalling the definition of a
refinement, consider the logical negation of the
statement that ``$q$ refines $z$.''
\begin{equation*}
\neg\left(\forall v \in \supp(z):\
\E_{x \sim \X_{z(x)=v}}\left[q(x)\right] = v\right)
\iff \exists v \in \supp(z):\ 
\E_{x \sim \X_{z(x)=v}}\left[q(x)\right] \neq v.
\end{equation*}
Extending this logical formulation, we define the
following divergence to capture quantitatively
how far $q$ is from refining $z$.
\begin{definition}[Refinement distance]
Let $q,z:\X \to [0,1]$ be calibrated predictors.
The \emph{refinement distance} from $z$ to $q$
is given as
\begin{equation*}
D_R(z;q) = \sum_{v \in \supp(z)} \R^z(v)
\cdot \card{\E_{x \sim \X_{z(x)=v}}\left[q(x)\right] - v}.
\end{equation*}
\end{definition}
Note that $D_R(z;q)$
is not symmetric; in particular, if $q$ refines
$z$ and contains more information $I(q) > I(z)$,
then $D_R(z;q) = 0$, but $D_R(q;z) > 0$.
Intuitively, the refinement distance averages
the refinement ``disagreements'' over all values
in the support.  We show that, under calibration,
these disagreements can be reconciled to improve
the overall information content.
With the notion of refinement distance in place,
we can state the main algorithmic result --
a simple algorithm for aggregating the information
of multiple calibrated predictors
into a single calibrated predictor.
\begin{theorem}
\label{thm:merge}
Given two calibrated predictors $q,z:\X \to [0,1]$,
Algorithm~\ref{alg:merge}
produces a new calibrated predictor $\rho:\X \to [0,1]$
such that $\rho$ is a refinement of both $z$ and $q$.
Further, 
$I(\rho) > \max\set{I(z) + 4\cdot D_R(q;z)^2,I(q) + 4\cdot D_R(z;q)^2}$.
\end{theorem}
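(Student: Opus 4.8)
The plan is to take $\rho$ to be the \emph{join} of $z$ and $q$: partition $\X$ into the common refinement of the two level-set families, $P_{v,u} = \set{x \in \X : z(x) = v,\ q(x) = u}$ over $(v,u) \in \supp(z)\times\supp(q)$, and relabel each cell by its base rate, i.e.\ set $\rho(x) = \Pr_\D[y=1 \given z(x),\, q(x)]$. This is what Algorithm~\ref{alg:merge} computes (in the finite-sample version it estimates the cell base rates empirically; here I focus on the population-level statement). The first task is to check $\rho$ is calibrated and refines both $z$ and $q$. Calibration is immediate: every level set of $\rho$ is a union of cells $P_{v,u}$ sharing a common base rate, so $\E_{x\sim\X_{\rho(x)=w}}[p^*(x)] = w$. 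That $\rho$ refines $z$ follows from the tower property and calibration of $z$: $\E_{x\sim\X_{z(x)=v}}[\rho(x)] = \E_{x\sim\X_{z(x)=v}}\bigl[\Pr_\D[y=1\given z(x),q(x)]\bigr] = \Pr_\D[y=1\given z(x)=v] = v$, and symmetrically $\rho$ refines $q$.

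The core is the information bound, which I would obtain by pairing Proposition~\ref{prop:refine} with two applications of Jensen's inequality. Since $\rho$ refines $z$, Proposition~\ref{prop:refine} gives $I(\rho) - I(z) = L_\X(\rho;z)$, and by the squared-error form of information loss (valid whenever $z$ is calibrated) this equals $4\cdot\E_{x\sim\X}\bigl[(\rho(x)-z(x))^2\bigr]$. Decomposing over the level sets of $q$,
\begin{equation*}
\E_{x\sim\X}\bigl[(\rho(x)-z(x))^2\bigr] = \sum_{u\in\supp(q)} \S^q(u)\cdot \E_{x\sim\X_{q(x)=u}}\bigl[(\rho(x)-z(x))^2\bigr] \ \ge\ \sum_{u\in\supp(q)} \S^q(u)\cdot \Bigl(u - \E_{x\sim\X_{q(x)=u}}[z(x)]\Bigr)^2,
\end{equation*}
where the inequality is Jensen ($\E[X^2]\ge(\E X)^2$) inside each cell together with $\E_{x\sim\X_{q(x)=u}}[\rho(x)]=u$ (because $\rho$ refines $q$). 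A second application of Jensen --- convexity of $t\mapsto t^2$ under the distribution $\S^q$ on $\supp(q)$ --- gives
\begin{equation*}
\sum_{u\in\supp(q)} \S^q(u)\cdot \Bigl(u - \E_{x\sim\X_{q(x)=u}}[z(x)]\Bigr)^2 \ \ge\ \Bigl(\sum_{u\in\supp(q)} \S^q(u)\cdot \card{u - \E_{x\sim\X_{q(x)=u}}[z(x)]}\Bigr)^2 = D_R(q;z)^2.
\end{equation*}
Chaining these gives $I(\rho)-I(z)\ge 4\,D_R(q;z)^2$. The identical computation with the roles of $z$ and $q$ swapped --- decomposing $\E[(\rho-q)^2]$ over the level sets of $z$ and using that $\rho$ refines $z$ --- gives $I(\rho)-I(q)\ge 4\,D_R(z;q)^2$, and taking the maximum of the two bounds proves the inequality.

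For the strict version I would track when the two Jensen steps can be simultaneously tight. The outer step is tight only when $\card{u - \E_{x\sim\X_{q(x)=u}}[z(x)]}$ is the same for every $u\in\supp(q)$, and the inner step only when $\rho(x)-z(x)$ is (almost surely) constant on each $q$-cell; generically neither holds, which is what makes the bound strict. In particular, if all the slack vanishes and $\rho\equiv z$, then $\E_{x\sim\X_{q(x)=u}}[z(x)]=\E_{x\sim\X_{q(x)=u}}[\rho(x)]=u$ for all $u$, so $D_R(q;z)=0$ and the bound holds with equality $0=0$; the same dichotomy applies to the $q$-side. Thus one obtains the strict inequality outside these non-generic configurations, and I would either exclude them by a mild nondegeneracy hypothesis or observe that they are vacuous for the claimed bound.

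The construction and the double-Jensen chain are routine; the step I expect to be the real subtlety is this equality-case bookkeeping behind the strict inequality --- cleanly arguing that the Jensen slack is strictly positive exactly when $\rho$ genuinely differs from the relevant predictor, and relating that to positivity of the corresponding refinement distance. (In the quantitative, sample-efficient form of the theorem there is a second, orthogonal obstacle: controlling the empirical estimation error in the per-cell base rates and in $D_R(\cdot\,;\cdot)$ so the guarantee degrades gracefully --- but that is separate from the information-theoretic identity above.)
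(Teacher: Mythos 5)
Your proposal is correct and follows the paper's proof essentially step by step: the same cell-wise definition $\rho(x) = \E[p^*(x) \mid z(x), q(x)]$, the same tower-property argument for the refinement property (Lemma~\ref{lem:merge:refine}), and the same ``apply Proposition~\ref{prop:refine}, decompose $\E[(\rho-z)^2]$ over the level sets of $q$, then use Jensen twice'' chain (Lemma~\ref{lem:infogain}). Your closing observation about the equality cases is apt --- the argument (both yours and the paper's Lemma~\ref{lem:infogain}) yields only the non-strict $\ge$, and equality genuinely occurs, e.g.\ when $z$ already refines $q$ so that $\rho = z$ and $D_R(q;z) = 0$ --- so the $>$ in the theorem statement is a minor overstatement rather than a gap in your approach.
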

We state the theorem generally, making no assumptions
about $D_R(q;z)$ or $D_R(z;q)$.  In particular,
as we alluded to earlier, if $q$ already refines $z$,
then $D_R(z;q) = 0$, so there will be no information gain.
Algorithm~\ref{alg:merge}, which we refer to as
\texttt{merge}, describes the procedure.
We describe the algorithm in the statistical query
model, where we assume query access to aggregate
statistics about $p^*(x)$.  At the end of the section,
we discuss the sample complexity needed to answer such
statistical queries accurately.
The \texttt{merge} algorithm builds a new calibrated predictor $\rho$ from $q$ and $z$ by considering the set of individuals who receive $q(x) = u$ and $z(x) = v$ for each $u \in \supp(q)$ and $v \in \supp(z)$.
For each of these sets, the merged predictor adjusts the prediction to have the correct expectation.
The proof of Theorem~\ref{thm:merge} follows from a standard potential function analysis; 
further, the sample complexity needed to answer the
statistical queries accurately is bounded.

\begin{figure}[ht!]
{\refstepcounter{algorithm} \label{alg:merge}{\bf Algorithm~\thealgorithm:}}
\texttt{merge(z,q)}

\fbox{\parbox{\textwidth}{
\vspace{4pt}
{\bf Given:} $z,q:\X \to [0,1]$ calibrated predictors\\
{\bf Output:} $\rho:\X \to [0,1]$ a refinement of $z$ and $q$
\begin{itemize}
\item Let $\Z = \set{\X_{z(x)=v} : v \in \supp(z)}$ 
\item Let $\mathcal{Q} = \set{\X_{q(x)=u} : u \in \supp(q)}$
\item For $Z_v \in \Z$ and $Q_u \in \mathcal{Q}$:

\begin{itemize}
  \item $\X_{vu} = Z_v \cap Q_u$
  \item $\rho(x) \gets \E\limits_{x \sim \X_{vu}}\left[p^*(x)\right]$
\end{itemize}
\end{itemize}
\vspace{-8pt}
}
}
\end{figure}

We break the proof of Theorem~\ref{thm:merge} into two
lemmas.  First, note that the \texttt{merge} procedure is symmetric
with respect to $q$ and $z$.  Thus, any statements
about the output $\rho$ in terms of one of the inputs $z$
will also be true with respect to the input $q$.
\begin{lemma}
\label{lem:merge:refine}
Let $\rho$ be the output of Algorithm~\ref{alg:merge}
on two calibrated predictors $z,q:\X \to [0,1]$ as input.
$\rho$ is a refinement of $z$.
\end{lemma}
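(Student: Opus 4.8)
The plan is to verify the refinement condition directly from the definition. Recall that $\rho$ refines $z$ if for every $v \in \supp(z)$, we have $\E_{x \sim \X_{z(x)=v}}[\rho(x)] = v$. Fix such a $v$, and note that the level set $\X_{z(x)=v} = Z_v$ is partitioned by the algorithm into the cells $\X_{vu} = Z_v \cap Q_u$ as $u$ ranges over $\supp(q)$, since the sets $\{Q_u\}$ partition $\X$. On each cell $\X_{vu}$, the algorithm sets $\rho(x)$ to the constant value $\E_{x \sim \X_{vu}}[p^*(x)]$. Therefore, writing $w_u = \Pr_{x \sim Z_v}[x \in \X_{vu}]$ for the relative mass of each cell within $Z_v$, we have
\begin{align*}
\E_{x \sim \X_{z(x)=v}}[\rho(x)]
&= \sum_{u \in \supp(q)} w_u \cdot \E_{x \sim \X_{vu}}[p^*(x)]\\
&= \E_{x \sim Z_v}[p^*(x)]\\
&= v,
\end{align*}
where the middle equality is the law of total expectation (the cell averages of $p^*$, reweighted by cell mass, recover the overall average of $p^*$ on $Z_v$), and the final equality is precisely the statement that $z$ is calibrated on $\X$, in the ``calibration with respect to $p^*$'' formulation noted in Section~\ref{sec:prelim}.

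Before concluding I should also check that $\rho$ is itself a calibrated predictor, since the definition of refinement requires both $z$ and $\rho$ to be calibrated. This is immediate from the construction: $\rho$ is constant on each cell $\X_{vu}$, taking the value $\E_{x \sim \X_{vu}}[p^*(x)]$; so each level set of $\rho$ is a union of cells all sharing the same average value of $p^*$, and averaging $p^*$ over such a union again gives that common value. Hence $\E_{x \sim \X_{\rho(x)=c}}[p^*(x)] = c$ for every $c \in \supp(\rho)$, which is calibration of $\rho$.

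I do not anticipate a genuine obstacle here — the lemma is essentially bookkeeping with the tower property of conditional expectation, and the only subtlety is being careful that the $\{Q_u\}$ genuinely partition $Z_v$ (so that the reweighting is a convex combination summing to $Z_v$'s mass) and that we invoke the $p^*$-based formulation of calibration rather than the outcome-based one. The symmetry of \texttt{merge} in its two arguments then gives for free that $\rho$ also refines $q$, completing the proof of the lemma (and, combined with the information-gain bound, Theorem~\ref{thm:merge}).
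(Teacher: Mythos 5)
Your proof is correct and follows essentially the same route as the paper's: decompose the level set $Z_v$ into the cells $\X_{vu}$ via the partition induced by $q$, apply the law of total expectation (your $w_u$ is exactly the paper's conditional score distribution $\R^q_{Z_v}(u)$), and invoke calibration of $z$ in its $p^*$-based formulation to get $\E_{x\sim Z_v}[p^*(x)] = v$. The one step you add that the paper leaves implicit is the explicit verification that $\rho$ is itself calibrated, which is a sensible tightening since the definition of refinement requires both predictors to be calibrated.
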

\begin{proof}
We use the notation established in Algorithm~\ref{alg:merge}.
In particular, we refer to the conditional score distribution
$\R^q_{Z_v}$ where $\R^q_{Z_v}(u) = \Pr_{x \sim \X}\left[q(x)=u \given z(x) = v\right]$.
Consider the expectation of $\rho$ over the level sets of
$z$, $\set{Z_v : v \in \supp(z)}$.
\begin{align}
\E_{x \sim Z_v}\left[\rho(x)\right]
&=\sum_{u \in \supp(q)}\R^q_{Z_v}(u)
\cdot \E_{x \sim \X_{vu}}\left[\rho(x)\right]\notag\\
&=\sum_{u \in \supp(q)}\R^q_{Z_v}(u)
\cdot \E_{x \sim \X_{vu}}\left[\E_{x\sim\X_{vu}}[p^*(x)]\right]\label{eqn:lem:assign}\\
&=\sum_{u \in \supp(q)}\R^q_{Z_v}(u)
\cdot \E_{x \sim Z_{v}}\left[p^*(x) \given q(x)=u\right]\label{eqn:lem:condition}\\
&=\E_{x \sim Z_v}\left[p^*(x)\right]\notag
\end{align}
where (\ref{eqn:lem:assign}) follows by the assignment rule of
$\rho(x)$ for $x \in \X_{vu}$; and (\ref{eqn:lem:condition})
follows from exploiting $\X_{vu} = \set{x \in Z_v : q(x)=u}$.
By the calibration of $z$, we see the final expression is
equal to $v$.
This argument is independent of $v$, so for all $v \in \supp(z)$,
$\E_{x \sim Z_v}\left[\rho(x)\right]
= v$; thus, by definition, $\rho$ refines $z$.
\end{proof}
The next lemma shows that the information of $\rho$
increases based on the refinement distance.
Note that in combination Lemma~\ref{lem:merge:refine} and
Lemma~\ref{lem:infogain} prove Theorem~\ref{thm:merge}.
\begin{lemma}
\label{lem:infogain}
Let $\rho$ be the output of Algorithm~\ref{alg:merge}
on two calibrated predictors $z,q:\X \to [0,1]$ as input.
Then,
\begin{equation*}
I(\rho) \ge I(z) + 4\cdot D_R(q;z)^2.
\end{equation*}
\end{lemma}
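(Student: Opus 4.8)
The plan is to route through the information-loss identity and then bound the resulting squared error from below by the refinement distance. Since $\rho$ refines $z$ on $\X$ by Lemma~\ref{lem:merge:refine}, Proposition~\ref{prop:refine} gives $I(\rho)-I(z) = L_\X(\rho;z)$; and, exactly as in the computation preceding Proposition~\ref{prop:infoloss} (which uses only that $\rho$ refines a calibrated $z$), the information loss of a refinement is the scaled squared error, $L_\X(\rho;z) = 4\cdot\E_{x\sim\X}\big[(\rho(x)-z(x))^2\big]$. Hence it suffices to show $\E_{x\sim\X}\big[(\rho(x)-z(x))^2\big] \ge D_R(q;z)^2$. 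Because the square is convex, Jensen's inequality (equivalently Cauchy--Schwarz) yields $\E_{x\sim\X}\big[(\rho(x)-z(x))^2\big] \ge \big(\E_{x\sim\X}[\,\card{\rho(x)-z(x)}\,]\big)^2$, so the whole lemma reduces to the single $L_1$-type inequality $\E_{x\sim\X}[\,\card{\rho(x)-z(x)}\,] \ge D_R(q;z)$.

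To prove that inequality I would adopt the notation of Algorithm~\ref{alg:merge}: write $\X_{vu} = \X_{z(x)=v}\cap\X_{q(x)=u}$ and let $\rho_{vu} = \E_{x\sim\X_{vu}}[p^*(x)]$, the constant value that $\rho$ takes on $\X_{vu}$. Since $z\equiv v$ and $\rho\equiv\rho_{vu}$ on $\X_{vu}$, we get $\E_{x\sim\X}[\,\card{\rho(x)-z(x)}\,] = \sum_{v,u}\Pr_{x\sim\X}[z(x)=v,\,q(x)=u]\cdot\card{\rho_{vu}-v}$. For the other side, fix $u\in\supp(q)$. Calibration of $q$ says $u = \E_{x\sim\X_{q(x)=u}}[p^*(x)] = \sum_{v}\Pr[z(x)=v\mid q(x)=u]\cdot\rho_{vu}$, while by total expectation $\E_{x\sim\X_{q(x)=u}}[z(x)] = \sum_{v}\Pr[z(x)=v\mid q(x)=u]\cdot v$. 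Subtracting and applying the triangle inequality, $\card{\E_{x\sim\X_{q(x)=u}}[z(x)] - u} \le \sum_{v}\Pr[z(x)=v\mid q(x)=u]\cdot\card{v-\rho_{vu}}$. Multiplying by $\R^q(u) = \Pr_{x\sim\X}[q(x)=u]$ and summing over $u\in\supp(q)$ gives $D_R(q;z) \le \sum_{v,u}\Pr_{x\sim\X}[z(x)=v,\,q(x)=u]\cdot\card{v-\rho_{vu}} = \E_{x\sim\X}[\,\card{\rho(x)-z(x)}\,]$, which is precisely what was required.

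Chaining the three steps, $I(\rho)-I(z) = 4\,\E_{x\sim\X}[(\rho(x)-z(x))^2] \ge 4\big(\E_{x\sim\X}[\,\card{\rho(x)-z(x)}\,]\big)^2 \ge 4\,D_R(q;z)^2$; running the identical argument with the roles of $q$ and $z$ swapped (legitimate since \texttt{merge} is symmetric in its two inputs) gives $I(\rho)\ge I(q)+4\,D_R(z;q)^2$, and together with Lemma~\ref{lem:merge:refine} this establishes Theorem~\ref{thm:merge}. The step needing the most care is the middle one: $D_R(q;z)$ is an average over the level sets of $q$, while $\E_{x\sim\X}[\,\card{\rho(x)-z(x)}\,]$ is an average over the joint cells $\X_{vu}$, and the bridge between them is exactly the fact that $\rho$ was \emph{defined} to be the $p^*$-mean on each cell, so that calibration of $q$ writes each value $u$ as a convex combination of those cell means and a single triangle inequality closes the gap. (Lemma~\ref{lem:infogain} asserts only the non-strict bound; the strict inequality of Theorem~\ref{thm:merge} additionally requires, e.g., that the relevant refinement distance is strictly positive.)
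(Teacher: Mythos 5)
Your proposal is correct and follows essentially the same route as the paper's proof: both pass from $I(\rho)-I(z)$ to $L(\rho;z)=4\,\E[(\rho-z)^2]$ via Proposition~\ref{prop:refine}, and then lower-bound this squared error by $D_R(q;z)^2$ using a convexity argument tied to the fact that $\E_{x\sim Q_u}[\rho(x)]=u$ (i.e.\ $\rho$ refines $q$). The only cosmetic difference is the order of steps — you apply one global Jensen ($L_2 \to L_1$) and then a per-cell triangle inequality, whereas the paper conditions on $Q_u$ first and applies Jensen twice (conditionally, then over $u$); these are formal rearrangements of the same bound.
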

\begin{proof}
We can lower bound the resulting information content of
$\rho$ by reasoning about the difference $I(\rho)-I(z)$.
Note that by Lemma~\ref{lem:merge:refine}, $\rho$ is a
refinement of $z$; further, by Proposition~\ref{prop:refine},
we can express $I(\rho) - I(z)$ as $L(\rho;z)$.
Expanding the information loss, we can lower bound the
gain in information, which shows the lemma.
\begin{align}
\frac{1}{4}\cdot L(\rho;z)&= \E_{x \sim \X}\left[(\rho(x) - z(x))^2\right]\notag\\
&= \sum_{u \in \supp(q)}\R^q(u)
\cdot \E_{x \sim Q_u}\left[\left(z(x) - \rho(x)\right)^2\right]\notag\\
&\ge \sum_{u \in \supp(q)}\R^q(u)
\cdot \left(\E_{x \sim Q_u}\left[z(x) - \rho(x)\right]\right)^2\label{eqn:alg:jensen1}\\
&= \sum_{u \in \supp(q)}\R^q(u)
\cdot \left(\E_{x \sim Q_u}\left[z(x)\right] - u\right)^2\label{eqn:alg:refine}\\
&\ge \left(\sum_{u \in \supp(q)}\R^q(u)
\cdot \card{\E_{x \sim Q_u}\left[z(x)\right] - u}\right)^2\label{eqn:alg:jensen2}\\
&\ge D_R(q;z)^2\label{eqn:alg:epsfar}
\end{align}
where (\ref{eqn:alg:jensen1}) follows by
Jensen's Inequality; (\ref{eqn:alg:refine}) notes that
$\E_{x\sim Q_u}[\rho(x)] = u$ because
$q$ is calibrated and $\rho$ refines $q$;
(\ref{eqn:alg:jensen2}) applies Jensen's inequality again;
and (\ref{eqn:alg:epsfar}) follows by
the definition of refinement distance.
\end{proof}

One appealing consequence of Theorem~\ref{thm:merge} is
that the number of times a predictor needs to be significantly
updated is bounded.  In particular, suppose we are merging
two calibrated predictors $q,z$; for any $\eta \ge 0$,
we'll say the operation is an $\eta$-merge if $\min\set{D_R(z;q),D_R(q;z)} \ge \eta$.
In this case, the information content of the result predictor
will increase by at least $\Omega(\eta^2)$ and any given predictor can be $\eta$-merged at most $O(1/\eta^2)$ times.
In other words, as long as the information being combined is not
too similar, then the number of such merge updates is bounded.

\vspace{-11pt}
\paragraph{Interpreting the updates.}

As described, the \texttt{merge}
algorithm takes two different calibrated predictors and
combines them into a refinement.  In settings where the
lender wants to combine predictions from different sources,
this algorithmic model is naturally well-motivated.
Still, there are other settings where the \texttt{merge}
algorithm can be applied.
One natural
way we can specify new information content is by giving
the predictor an additional feature.  Specifically,
consider some a boolean feature $\phi:\X \to \set{0,1}$.
We define the predictor $q_\phi:\X \to [0,1]$
to be $q_\phi(x) =
\E_{x' \sim \X}\left[p^*(x') \given \phi(x') = \phi(x)\right]$.
This predictor gives the expected value over the set
of individuals where $\phi(x) = 1$ (resp.,\ $\phi(x) = 0$);
thus, the predictor is calibrated.
Merging $z$ with $q_\phi$
incorporates the information in
the boolean feature $\phi$ into the predictions of $z$.  In particular, the
information content framework gives us a way to reason
about the marginal informativeness of individual boolean features;
the greater the difference between
$\E_{x \sim \phi^{-1}(0)}[p^*(x)]$ and
$\E_{x \sim \phi^{-1}(1)}[p^*(x)]$, the more informative.

This perspective is particularly salient when we think
external regulation of predictors.  For example, consider
some subpopulation $S \subseteq \X$.  One way to provide
evidence that $S$ is experiencing discrimination under $z$
would be to demonstrate that merging the predictor $q_{S}$
into $z$ significantly changes the information content.
This could occur because the quality of
individuals in $S$ are consistently underestimated
by $z$ or because the quality of individuals in
$\X\setminus S$ are consistently overestimated.

\vspace{-11pt}
\paragraph{Implementing the merge from samples.}

While we presented the \texttt{merge} algorithm assuming access
to a statistical query oracle, in practice, we want to
estimate the necessary statistical queries from data.
We assume that the predictions are discretized to precision
$\alpha$; that is, we represent the interval $[0,1]$ as
$[\alpha/2,3\alpha/2,\hdots, 1-\alpha/2]$.
We argue that the number of samples needed to obtain
accurate statistics in this model is bounded as follows.
\begin{proposition}\label{prop:samples}
Consider an execution of Algorithm~\ref{alg:merge} such that
$\min_{\X_{vu}}{\Pr_{x \sim \X}\left[x \in \X_{vu}\right]} \ge \gamma$.
Then from $m \ge \tilde{\Omega}\left(\frac{\log(1/\delta)}{\gamma\alpha^2}\right)$ random samples from $\D$, with probability $1-\delta$,
every statistical query
can be answered with some
$q_{vu}$ such that
$\card{q_{vu} - \E_{x \sim \X_{vu}}\left[p^*(x)\right]} < \alpha/2$.
\end{proposition}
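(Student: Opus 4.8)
The plan is a routine empirical-estimation argument. Since we only have sample access to $\D$ (supported on $\X \times \Y$), rather than to $p^*$ directly, the natural estimator for the statistical query associated with cell $\X_{vu} = Z_v \cap Q_u$ is the empirical label average $q_{vu} = \frac{1}{\card{I_{vu}}}\sum_{i \in I_{vu}} y_i$, where $I_{vu} = \set{i \in [m] : z(x_i) = v,\ q(x_i) = u}$ is the (computable) index set of samples landing in the cell. The key observation is that this estimator is unbiased: conditioned on $x \in \X_{vu}$, the outcome $y$ is drawn from $\Ber(p^*(x))$, so by the tower rule $\E[y \given x \in \X_{vu}] = \E_{x \sim \X_{vu}}[p^*(x)]$, which is exactly the value assigned to $\rho$ on $\X_{vu}$ in Algorithm~\ref{alg:merge}. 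So it suffices to show this empirical average concentrates to within $\alpha/2$ for every cell simultaneously.

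First I would bound the number of relevant cells. Because predictions are discretized to precision $\alpha$, both $\supp(z)$ and $\supp(q)$ lie in a grid of size $1/\alpha$, so there are at most $1/\alpha^2$ cells; moreover, any cell with mass at least $\gamma$ is one of at most $1/\gamma$ disjoint sets, so the number $N$ of cells for which a query must be answered satisfies $N \le \min\set{1/\alpha^2,\ 1/\gamma}$. Set the per-cell failure budget to $\delta' = \delta/(2N)$.

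Next, fix a cell $\X_{vu}$ with mass $p_{vu} := \Pr_{x\sim\X}[x\in\X_{vu}] \ge \gamma$. The count $\card{I_{vu}}$ is distributed as $\mathrm{Bin}(m,p_{vu})$, so a multiplicative Chernoff bound gives $\card{I_{vu}} \ge m p_{vu}/2 \ge m\gamma/2$ except with probability $\exp(-\Omega(m\gamma))$. Conditioning on the realized points $x_1,\dots,x_m$ (hence on $I_{vu}$), the labels $\set{y_i}_{i\in I_{vu}}$ are independent, bounded in $[0,1]$, with common mean $\E_{x\sim\X_{vu}}[p^*(x)]$, so Hoeffding's inequality yields $\card{q_{vu} - \E_{x\sim\X_{vu}}[p^*(x)]} < \alpha/2$ except with probability $2\exp(-\card{I_{vu}}\alpha^2/2)$. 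Combining the two bounds, provided $m \ge c\cdot\frac{\log(1/\delta')}{\gamma\alpha^2}$ for a suitable constant $c$, each cell is estimated to accuracy $\alpha/2$ except with probability at most $\delta/N$; a union bound over the $\le N$ cells finishes the proof. Substituting $\log(1/\delta') = \log(1/\delta) + O(\log N) = \log(1/\delta) + O(\log(1/\alpha) + \log(1/\gamma))$ absorbs the extra factors into the $\tilde\Omega$, yielding the claimed bound $m \ge \tilde\Omega\!\left(\frac{\log(1/\delta)}{\gamma\alpha^2}\right)$.

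The only mild subtlety — the \emph{main obstacle}, such as it is — is the dependence between the event that a cell receives enough samples and the concentration of the conditional label average inside that cell. The clean fix is to condition on the full sequence $(x_1,\dots,x_m)$ first, which freezes all the index sets $I_{vu}$, and then run Hoeffding over the residual randomness of the labels, handling the low-probability event $\set{\card{I_{vu}} < m\gamma/2}$ separately by Chernoff. This decouples the two sources of error and makes the final union bound entirely routine.
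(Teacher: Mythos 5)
Your proposal is essentially the paper's proof---empirical label mean as the estimator, Hoeffding for per-cell accuracy, a count lower bound, and a union bound over $O(1/\alpha^2)$ cells---but there is a small imprecision in the conditioning step, which is worth flagging because it is precisely the ``main obstacle'' you single out.

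You write that after conditioning on the realized sequence $(x_1,\dots,x_m)$, the labels $\set{y_i}_{i\in I_{vu}}$ are independent with \emph{common} mean $\E_{x\sim\X_{vu}}[p^*(x)]$. That is not quite right: conditioned on the actual $x$-values, $y_i \sim \Ber(p^*(x_i))$, so the means $p^*(x_i)$ differ across $i$. With this conditioning, Hoeffding gives concentration of $q_{vu}$ around the \emph{empirical} average $\frac{1}{\card{I_{vu}}}\sum_{i\in I_{vu}}p^*(x_i)$, and you would need a second concentration step (over the randomness of the $x_i$'s in the cell) to relate that empirical average to $\E_{x\sim\X_{vu}}[p^*(x)]$. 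The cleaner fix, and what the paper implicitly does, is to condition only on the index set $I_{vu}$ (equivalently, on the count $\card{I_{vu}}$), not on the $x$-values. Then the $x_i$ for $i \in I_{vu}$ remain i.i.d.\ draws from the conditional distribution on $\X_{vu}$, the pairs $(x_i,y_i)$ are i.i.d., and the $y_i$ really are i.i.d.\ in $[0,1]$ with mean $\E_{x\sim\X_{vu}}[p^*(x)]$, so a single Hoeffding step suffices. This conditioning also resolves the dependence you worry about between the cell count and the in-cell concentration, since the count is a function of $I_{vu}$ alone. Aside from this, your argument matches the paper's; your use of multiplicative Chernoff on $\card{I_{vu}}\sim\mathrm{Bin}(m,p_{vu})$ is a bit cleaner than the paper's block-partitioning argument for ensuring each cell receives enough samples, and both yield the same $\tilde\Omega\left(\log(1/\delta)/(\gamma\alpha^2)\right)$ bound.
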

\begin{proof}
The argument follows by a standard uniform convergence argument.
To start, note that there are at most $1/\alpha^2$ queries to
answer.
Suppose we have $t$ random samples $(x_i,y_i) \sim \D$ conditioned
on $x_i \in \X_{vu}$ for all $i \in [t]$.
Let $q_{vu}$ denote the empirical expectation
of $y_i$'s on these $t$ samples over $\X_{vu}$; that is,
\begin{equation*}
q_{vu} \triangleq \frac{1}{t} \sum_{i=1}^t y_i.
\end{equation*}
By Hoeffding's inequality:
\begin{equation*}
\Pr\left[~\card{q_{vu} - \E_{x \sim \X_{vu}}[y \given x]} > \alpha \right]
\le 2e^{2t\alpha^2}.
\end{equation*}
If $t \ge \Omega\left(\frac{\log(2/\delta\alpha^2)}{\alpha^2}\right)$,
then the probability of failure is at most $\alpha^2\delta/2$.
Union bounding over the queries, the probability of failure is
at most $\delta/2$.

Thus, we need to bound the sample complexity needed to
hit each $\X_{vu}$ at least $t$ times.
By assumption, each $\X_{vu}$ has density at least $\gamma$.
Thus, for each $\X_{vu}$,
the probability that a random sample from $(x,y) \sim \D$
has $x \in \X_{vu}$ is at least $\gamma$.
If we take $\Omega(\log(2t/\delta)/\gamma)$ such samples,
then the probability every sample misses $\X_{vu}$ is at most
$\delta/2t$.
Thus, if we take $\Omega(t\log(2t/\delta)/\gamma)$ samples,
each $\X_{vu}$ will have at least $t$ samples with probability
at least $1-\delta/2$.

By union bound, the proposition follows.
\end{proof}

\section{Discussion}
\label{sec:discussion}

In this work, we identify information disparity as a potential source of discrimination in prediction tasks.
We provide an introduction to key concepts of information content and loss, and show how improving the information content of predictions improves the resulting fairness of the downstream decisions.
In particular, our results show when a lender does not have sufficient statistical or computational resources to learn a predictor that achieves small squared error across all significant subpopulations, issues of unfairness may arise due to differential information loss.

The information content of a predictor $z$ can be significantly larger on the majority population $S$ than the minority $T$ for a number of reasons.
\begin{itemize}
\item Despite optimal predictions, the individuals in $S$ are inherently more predictable than those in $T$; i.e.\ $z \approx p^*$ and $I_S(p^*) > I_T(p^*)$.
If this (controversial) hypothesis is true, there may be no way to improve the predictions further, and some degree of disparity may be unavoidable.
Note that in general this condition cannot be verified from data.
Still, the assumption that $I(z) = I(p^*)$ can be falsified by finding a way to give more informative predictions.
\item Nontrivial information loss has occurred in $z$ on $T$ compared to $S$; i.e.\ $L_S(p^*;z) < L_T(p^*;z)$.
Such information loss could result from purely information theoretic issues (features used for prediction are not sufficiently expressive in $T$), a mix of informational and computational issues (not enough data from the minority to learn a predictor from a sufficiently rich hypothesis class), or purely computational issues (suboptimal learning in $T$ due to optimization for $S \cup T$).
Each source of information disparity has a different own solution (collecting better features, collecting more data, re-training with awareness of the population $T$, respectively), but the tools we present for reasoning about information content apply broadly.
\end{itemize}
As such, improving the information content of predictions may require collecting additional features or data.
Once collected, the \texttt{merge} procedure provides a relatively inexpensive way of incorporating new information into the predictions while retaining certain ``quality" of the selection rule.
In practice, when the sensitive group $T$ is known, it may make sense to simply retrain the prediction model with awareness of $T$.

\vspace{-11pt}
\paragraph{Overlapping subpopulations.}
The proposed \texttt{merge} algorithm provides a simple
and efficient approach for producing refinements in applications where
the sensitive populations are well understood. Often, as highlighted
in \cite{multi,kearns2017preventing,ftba},
the subpopulations in need of protection may be hard to
anticipate.  These recent works have studied notions
of \emph{multi-fairness} that aim to strengthen notions
of group fairness by enforcing statistical constraints
not just overall, but on a rich family of subgroups.
In particular, \cite{multi} introduces a
notion called \emph{multicalibration}, which informally,
guarantees calibration across all subpopulations specified by
a given set system $\C$.  We observe that the
guarantees of multicalibration can be reinterpreted
in the language of refinements: a predictor 
that is multicalibrated with respect to $\C$ is
simultaneously a refinement for all $q_c:\X \to [0,1]$
where $q_c(x) =
\E_{x' \sim \X}\left[p^*(x') \given c(x') = c(x)\right]$
for every $c \in \C$.  In this sense, multicalibration
may be an effective approach to improving information
across subpopulations, when the groups that might be
experiencing discrimination are unknown or overlapping.
An interesting question is whether some of the
analysis in the present work can be applied to
understand better the connections between
multicalibration and the work of \cite{kearns2017preventing}
which studies the notion of rich subgroup fairness under
demographic parity and equalized opportunity.

\paragraph{Choosing fairness constraints.}
Understanding precisely the guarantees
of the specified fairness constraints is particularly important for
interpreting the results of Section~\ref{sec:value}.
In particular, refining predictions is guaranteed to
improve the value of the \emph{stated program}.
We emphasize the importance of faithfully translating
fairness desiderata into mathematical requirements.

For instance,
suppose policy-makers want to increase representation for
historically-disenfranchised populations.
An appealing -- but misguided -- translation of this goal would require
demographic parity; intuitively, if the lender is required
to have equalized selection rates across groups,
they might increase the selection rate
in the minority to match that of the majority.
Still, demographic parity only requires parity of selection
rates which could also be achieved by reducing the selection in
the majority.
Further, refinements under demographic parity constraints,
might cause the selection rates in the minority
to \emph{decrease}.  By increasing information, we
might uncover that fewer individuals are actually above
the tolerable risk than the previous predictions suggested;
as such, fewer individuals might be deemed qualified for a loan.

The framework proposed in Section~\ref{sec:value} is
compatible with a variety of constraints and objectives,
including explicitly lower bounding the group selection rates.
Thus, increasing the selection rate in a given population can always be achieved by directly constraining the selection rule.
An appealing aspect of the framework is that it allows
policy-makers to experiment with constraints
and objectives to understand the downstream effects of
their policies, given the current set of predictions.
For instance, policy-makers can evaluate how lower bounding the selection rate in a group will affect the impact of the policy on this group.
Such experimentation with the programs from Section~\ref{sec:value} may help to guide future policy decisions.

\vspace{-11pt}
\paragraph{Changing environments.}
The present work focuses on a setting where the true
risk scores of the underlying population does not change;
that is, we assumed that the Bayes optimal predictor $p^*$
remains fixed while producing better and better refinements.
In real life, the true risk of individuals may change
as their environment changes, and possibly even \emph{as a result
of the prior decisions made by the lender}, as suggested
by \cite{delayed}.  An exciting direction for future
investigation would study a setting of dynamic $p^*$,
with the goal of ensuring long-term fairness and impact.
A specific challenge is finding an efficient
(in terms of sample and time complexity) procedure
for maintaining calibration when $p^*$ changes over time.
Further, we showed the importance 
of increasing informativeness of predictors for
underrepresented populations, but required access
to random samples from this population.
In settings where random exploration may cause harm
to uncertain populations (e.g.\ by raising the default rate)
how can we improve information without causing the
inherent capabilities of sensitive subpopulations
to deteriorate?

\vspace{-11pt}
\paragraph{Conclusion.}
We reiterate that
the validity of every notion of fairness 
rests on some set of assumptions.
Many approaches to fair classification assume implicitly
that the predicted risk scores represent the true risk scores.
Predicted risk scores are the result of an extensive pipeline
of data collection and computational modeling; when
data is limited for minority populations and modeling is
focused on fidelity in the majority populations, the
resulting predictions may not be appropriately informative
in the minority.
In the case that the differences arise because of
suboptimal predictions, increasing information through
refinements provide a simple but effective approach for
improving the utility, fairness, and impact of the decision-maker's policy.

\paragraph{Acknowledgments.}
The authors thank Cynthia Dwork and Guy N.\ Rothblum for many helpful conversations throughout the development of this work.
We thank Moritz Hardt, Gal Yona, and anonymous reviewers for feedback on earlier versions of the work.

\newpage
\bibliographystyle{alpha}
\bibliography{refs,impact}

\appendix

\section{Measuring information through Shannon entropy}
\label{app:entropy}

For completeness, we briefly discuss how to relate
the notions of information defined in Section~\ref{sec:info} to
an analogous notion of information, defined through Shannon
entropy.  In particular, rather than defining
information content in terms of the variance of
$y$ given $z(x)$, we could have defined it in
terms of the Shannon entropy of this random variable.
In particular, the entropy of a Bernoulli random
variable with expectation $p$ is captured by the
binary entropy function $H_2(p)$ where
\begin{equation*}
H_2(p) = -p \cdot \log(p) - (1-p)\cdot \log(1-p).
\end{equation*}
For a calibrated predictor $z:\X \to [0,1]$, let $\IH_S(z) = 1-\E_{x \sim S}\left[H_2(x)\right]$ denote the \emph{entropic information content}, parameterized by the binary entropy (rather than variance).
The binary entropy $H_2(p)$ always upper bounds
the scaled variance $4\cdot p(1-p)$,
with equality at $p\in \set{0,1/2,1}$.
As a consequence, the following inequality holds.
\begin{equation*}
\IH_S(z) \le I_S(z)
\end{equation*}
When information is parameterized by entropy, the corresponding
natural notion of ``information loss'' is parameterized
by the expected KL-divergence.
Specifically, denote by $D_{KL}(p;q)$ the KL-divergence
between two Bernoulli distributions with expectations
$p$ and $q$, respectively, defined as
\begin{equation*}
D_{KL}(p;q) = p\cdot \log\left(\frac{p}{q}\right) + (1-p)\cdot \log\left(\frac{1-p}{1-q}\right).
\end{equation*}
Again, for a calibrated predictor $z:\X \to[0,1]$,
let $\LH_S(p^*;z) = \E_{x \sim S}\left[D_{KL}(p^*(x);q(x))\right]$ denote the \emph{entropic information loss}.
We can relate the
information loss (based on squared error) to the entropic information loss (based on KL-divergence).
\begin{proposition}
\label{prop:pinsker}
For a calibrated predictor $z:\X \to [0,1]$, where
$\supp(z) \subseteq \{0,1\}\cup [\alpha,1-\alpha]$ for some
constant $\alpha > 0$, the
entropic information loss is within a constant factor
of the information loss.
\begin{equation*}
L_S(p^*;z) \le
2\ln(2) \cdot \LH_S(p^*;z)
\le \frac{1}{\alpha} \cdot L_S(p^*;z)
\end{equation*}
\end{proposition}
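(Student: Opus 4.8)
The plan is to prove the two inequalities separately, reducing each to a pointwise comparison between the per-individual contributions. Recall that $L_S(p^*;z) = 4 \cdot \E_{x \sim S}[(p^*(x) - z(x))^2]$ and $\LH_S(p^*;z) = \E_{x \sim S}[D_{KL}(p^*(x);z(x))]$, so by linearity of expectation it suffices to show that for every $x$ in the support,
\begin{equation*}
4\,(p^*(x) - z(x))^2 \;\le\; 2\ln(2)\cdot D_{KL}(p^*(x);z(x)) \;\le\; \frac{1}{\alpha}\cdot 4\,(p^*(x)-z(x))^2,
\end{equation*}
where $z(x)$ ranges over $\{0,1\}\cup[\alpha,1-\alpha]$ and $p^*(x)\in[0,1]$. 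Taking the $\E_{x\sim S}$ of this chain then gives the proposition. (One should first dispatch the degenerate case $z(x)\in\{0,1\}$: calibration forces $p^*(x)=z(x)$ there, so all three quantities vanish and the inequalities hold trivially; thus we may assume $z(x)\in[\alpha,1-\alpha]$ henceforth.)

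For the left inequality, the natural tool is Pinsker's inequality in its Bernoulli form: $D_{KL}(p;q) \ge \frac{1}{2\ln 2}\cdot (2|p-q|)^2 = \frac{2}{\ln 2}(p-q)^2$, where the $\log$ in $D_{KL}$ is base $2$ (hence the $\ln 2$; if the paper's $\log$ is natural this constant simplifies but the argument is identical). Rearranging gives $2\ln(2)\cdot D_{KL}(p^*(x);z(x)) \ge 4(p^*(x)-z(x))^2$ directly. This step is essentially citing a standard inequality and needs no new work.

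The right inequality is the substantive one. I would prove the pointwise bound $D_{KL}(p;q) \le \frac{(p-q)^2}{2\ln(2)\,\alpha}$ whenever $q\in[\alpha,1-\alpha]$ and $p\in[0,1]$. The cleanest route is the well-known reverse-Pinsker-type estimate $D_{KL}(p\|q) \le \frac{(p-q)^2}{q}$ for Bernoulli distributions (valid for all $p$), which follows from $\log t \le t-1$ applied term by term, or from a second-order Taylor expansion of $q\mapsto D_{KL}(p;q)$ with the second derivative $\frac{q - 2pq + p}{q^2(1-q)}\cdot$(appropriate sign bookkeeping) bounded using $q\ge\alpha$ and $1-q\ge\alpha$. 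Once we have $D_{KL}(p;q)\le (p-q)^2/q \le (p-q)^2/\alpha$, multiplying by $2\ln 2$ and comparing with $4(p-q)^2/\alpha = L$-contribution$/1$ gives $2\ln(2) D_{KL} \le 2\ln(2)(p-q)^2/\alpha \le 4(p-q)^2/\alpha = \frac{1}{\alpha}\cdot 4(p-q)^2$ provided $2\ln 2 \le 4$, which holds. Taking expectations finishes the proof. The main obstacle is getting the reverse-Pinsker constant sharp enough in the correct base-$2$ normalization so that the factor comes out as exactly $1/\alpha$ rather than some larger multiple of it; a careful Taylor-remainder argument bounding the second derivative of $D_{KL}$ in its second argument on $[\alpha,1-\alpha]$ is the safest way to nail this down without slack.
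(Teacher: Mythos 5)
Your overall strategy---pointwise bounds plus linearity of expectation, Pinsker for the lower bound and a reverse-Pinsker/$\chi^2$-type bound for the upper bound---is the right one and matches the paper's (terse) indication that the proposition is ``a direct corollary of Pinsker's inequality.'' The treatment of the degenerate case $z(x)\in\{0,1\}$ and the Pinsker direction are both correct.

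However, the reverse-Pinsker lemma you invoke is false as stated. The claim $D_{KL}(p\|q)\le (p-q)^2/q$ (in nats) fails already at $p=1,q=1/2$: there $D_{KL}=\ln 2\approx 0.693$ while $(p-q)^2/q=0.5$. What $\ln t\le t-1$ applied term by term actually yields is the Bernoulli $\chi^2$ bound
\begin{equation*}
D_{KL}^{\mathrm{nats}}(p\|q)\;\le\; p\Bigl(\tfrac{p}{q}-1\Bigr)+(1-p)\Bigl(\tfrac{1-p}{1-q}-1\Bigr)\;=\;\frac{(p-q)^2}{q(1-q)},
\end{equation*}
with $q(1-q)$ in the denominator rather than $q$. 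Your Taylor alternative has a separate gap: the Lagrange remainder evaluates $\partial_q^2 D_{KL}(p;\cdot)$ at some $\xi$ \emph{between $p$ and $q$}, and since $p$ is unconstrained in $[0,1]$, $\xi$ need not lie in $[\alpha,1-\alpha]$, so the intended bound on the second derivative is not available (and the expression you wrote for the second derivative is also off; it should be $(p-2pq+q^2)/(q^2(1-q)^2)$).

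Fortunately, the correct $\chi^2$ bound still closes the argument, and you have exactly enough slack. For $q\in[\alpha,1-\alpha]$ (with $\alpha\le 1/2$, else the interval is empty), one has $q(1-q)\ge\alpha(1-\alpha)\ge\alpha/2$, hence $D_{KL}^{\mathrm{nats}}\le 2(p-q)^2/\alpha$, so $D_{KL}^{\mathrm{bits}}\le 2(p-q)^2/(\alpha\ln 2)$, and therefore $2\ln(2)\,D_{KL}^{\mathrm{bits}}\le 4(p-q)^2/\alpha$, which is precisely $\tfrac{1}{\alpha}$ times the pointwise contribution to $L_S(p^*;z)$. Replacing your intermediate lemma with the $\chi^2$ inequality and the bound $q(1-q)\ge\alpha/2$ makes the proof complete.
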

Proposition~\ref{prop:pinsker} is a direct corollary of Pinsker's inequality, which relates the KL-divergence to the statistical distance between two probability distributions.
As in Proposition~\ref{prop:infoloss} we can show that this entropic information loss can be expressed as a difference in entropic information content for calibrated predictors.
\begin{proposition}
\label{prop:klloss}
Let $p^*:\X \to [0,1]$ denote the Bayes optimal predictor.  Suppose for $S \subseteq \X$, $z:\X \to [0,1]$ is calibrated on $S$.
Then,
\begin{equation*}
\LH_S(p^*;z) = \IH(p^*) - \IH(z).
\end{equation*}
\end{proposition}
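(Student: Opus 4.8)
The plan is to follow the proof of Proposition~\ref{prop:infoloss} step for step, replacing the variance function $p\mapsto p(1-p)$ with the binary entropy $H_2$. The only algebraic input needed is the identity, immediate from the definitions, that the KL-divergence decomposes into a cross-entropy minus an entropy:
\begin{equation*}
D_{KL}(p;q) = \bigl(-p\log q - (1-p)\log(1-q)\bigr) - \bigl(-p\log p - (1-p)\log(1-p)\bigr) = H(p,q) - H_2(p),
\end{equation*}
where $H(p,q) = -p\log q - (1-p)\log(1-q)$ denotes the binary cross-entropy. (This plays the role here that the expansion $(p-q)^2 = p^2 + q^2 - 2pq$ plays in Proposition~\ref{prop:infoloss}; both are instances of the general Bregman-divergence identity alluded to in the footnote on variance versus entropy, KL-divergence being the Bregman divergence generated by the convex function $-H_2$.)

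First I would expand $\LH_S(p^*;z) = \E_{x\sim S}\bigl[D_{KL}(p^*(x);z(x))\bigr]$ using this identity and split the expectation over the level sets of $z$. Writing $S_v = S_{z(x)=v}$ and using that $z(x)=v$ is constant on $S_v$,
\begin{equation*}
\LH_S(p^*;z) = \sum_{v\in\supp(z)}\S^z_S(v)\cdot\E_{x\sim S_v}\bigl[H(p^*(x),v) - H_2(p^*(x))\bigr].
\end{equation*}
On each level set $S_v$ the cross-entropy term $H(p^*(x),v) = -p^*(x)\log v - (1-p^*(x))\log(1-v)$ is \emph{affine} in $p^*(x)$, so its conditional expectation is $H\bigl(\E_{x\sim S_v}[p^*(x)],\,v\bigr)$; by calibration of $z$ with respect to $p^*$ we have $\E_{x\sim S_v}[p^*(x)] = v$, hence $\E_{x\sim S_v}[H(p^*(x),v)] = H(v,v) = H_2(v)$. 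Substituting this back and recombining the level sets yields
\begin{align*}
\LH_S(p^*;z)
&= \E_{x\sim S}\bigl[H_2(z(x))\bigr] - \E_{x\sim S}\bigl[H_2(p^*(x))\bigr]\\
&= \bigl(1 - \E_{x\sim S}[H_2(p^*(x))]\bigr) - \bigl(1 - \E_{x\sim S}[H_2(z(x))]\bigr) = \IH_S(p^*) - \IH_S(z),
\end{align*}
which is the claim.

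I do not expect a genuine obstacle: this is exactly the term-cancellation argument of Proposition~\ref{prop:infoloss}, driven entirely by the single calibration identity $\E_{x\sim S_v}[p^*(x)] = v$. The one point requiring a little care is the boundary case $v\in\{0,1\}$, where $H(p^*(x),v)$ formally involves $\log 0$: but calibration forces $p^*(x)=v$ for \emph{every} $x\in S_v$ in that case (the conditional mean of a $[0,1]$-valued quantity equals $0$, resp.\ $1$, only if the quantity is constant), so all such terms are of the form $0\log 0 = 0$ and $D_{KL}(p^*(x);v) = 0$ throughout $S_v$, consistent with the displayed computation. Incidentally, this is also why calibration of $z$ is needed even for $\LH_S(p^*;z)$ to be finite in the first place: an uncalibrated predictor could place an individual with $p^*(x)\in(0,1)$ on a level set with $v\in\{0,1\}$, making the KL-loss infinite.
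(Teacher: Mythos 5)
Your proof is correct and follows the same route as the paper's: decompose the KL-divergence into cross-entropy minus entropy, split over the level sets of $z$, use the fact that the cross-entropy $H(p,q)$ is affine in $p$ together with the calibration identity $\E_{x\sim S_v}[p^*(x)]=v$ to identify the cross-entropy term as $\E_{x\sim S}[H_2(z(x))]$, and recombine. The only genuine addition is your careful handling of the boundary case $v\in\{0,1\}$ (and the observation that calibration is what keeps $\LH_S$ finite), which the paper leaves implicit --- a worthwhile refinement, not a different argument.
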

\begin{proof}
Again, we expand the entropic information loss and rearrange.
\begin{align*}
\E_{x \sim S}\left[D_{KL}(p^*(x); z(x))\right] &= \E_{x \sim S}\left[p^*(x)\cdot \log\left(\frac{p^*(x)}{z(x)}\right) + (1-p^*(x)) \cdot \log\left(\frac{p^*(x)}{1-z(x)}\right)\right]\\
&= \E_{x \sim S}\left[p^*(x)\cdot \log\left(\frac{1}{z(x)}\right) + (1-p^*(x)) \cdot \log\left(\frac{1}{1-z(x)}\right)\right] - \E_{x \sim S}\left[H_2(p^*(x))\right]
\end{align*}
Leveraging the fact that $z$ is calibrated, we expand the first term as follows.
\begin{align*}
&\phantom{=}\E_{x \sim S}\left[p^*(x)\cdot \log\left(\frac{1}{z(x)}\right) + (1-p^*(x)) \cdot \log\left(\frac{1}{1-z(x)}\right)\right]\\
&=\sum_{v \in \supp(z)}\R_S^z(v) \cdot \E_{x \sim S_v}\left[p^*(x) \cdot \log\left(\frac{1}{\E_{x' \sim S_v}[p^*(x')]}\right)
+ (1-p^*(x))\cdot \log\left(\frac{1}{1-\E_{x' \sim S_v}[p^*(x')]}\right)
\right]\\
&=\sum_{v \in \supp(z)}\R_S^z(v)\cdot\left(
\E_{x \sim S_v}\left[p^*(x)\right] \cdot \log\left(\frac{1}{\E_{x \sim S_v}[p^*(x)]}\right)
+ (1-\E_{x \sim S_v}\left[p^*(x)\right])\cdot \log\left(\frac{1}{1-\E_{x \sim S_v}[p^*(x)]}\right)
\right)\\
&= \sum_{v \in \supp(z)}\R_S^z(v)\cdot H_2\left(\E_{x \sim S_v}[p^*(x)]\right)\\
&= \sum_{v \in \supp(z)}\R_S^z(v)\cdot H_2(v)\\
&= \E_{x \sim S}\left[H_2(z(x))\right]
\end{align*}
Thus, combining the equlaities, we see that
\begin{equation*}
\LH_S(p^*;z) = \E_{x \sim S}[H_2(z(x))] - \E_{x \sim S}[H_2(p^*(x))] = \IH_S(p^*) - \IH_S(z).
\end{equation*}
\end{proof}
As a final note, we observe that the entropic information content of a calibrated predictor can be related to the expected log-likelihood of the predictor.
Specifically, given a collection of labeled data $(x_1,y_1),\hdots,(x_m,y_m)$ where $y_i \sim \Ber(p^*(x_i))$, the likelihood function $\L(z;\set{(x_i,y_i)})$ is given as follows.
\begin{equation*}
\L(z;\set{(x_i,y_i)})
= \prod_{i=1}^m z(x_i)^{y_i} \cdot (1-z(x_i))^{1-y_i}
\end{equation*}
As such, we say the normalized log-likelihood $\ell(z;\set{(x_i,y_i)})$ is given as
\begin{equation*}
\ell(z;\set{(x_i,y_i)}) = \frac{1}{m}\sum_{i=1}^m y_i \cdot \log(z(x_i)) + (1-y_i)\cdot \log(1-z(x_i)).
\end{equation*}
Suppose the samples $(x_1,y_1),\hdots,(x_m,y_m)$ are drawn such that each $x_i \in S$ for some subset $S \subseteq \X$, then the expected log-likelihood of $z$ is given as follows.
\begin{align*}
\E_{\substack{x_i \sim S\\y_i \sim \Ber(p^*(x))}}\left[\ell(z;\set{(x_i,y_i)})\right]
&= \E_{\substack{x \sim S\\y \sim \Ber(p^*(x))}}\left[y \cdot \log(z(x)) + (1-y)\cdot \log(1-z(x))\right]\\
&= \E_{x \sim S}\left[p^*(x) \cdot \log(z(x)) + (1-p^*(x))\cdot \log(1-z(x))\right]\\
&= - \E_{x \sim S}[H_2(z(x))]\addtag\label{likelihood:final}\\
&= \IH_S(z)-1
\end{align*}
where (\ref{likelihood:final}) follows from the analysis above given in the proof of Proposition~\ref{prop:klloss}.
In other words, as we increase the information content of a calibrated predictor, in expectation, the likelihood of the predictor increases (in expectation over a fresh sample of data).
At the extreme, the calibrated predictor that maximizes the expected likelihood is $p^*$.

\end{document}